\newcommand{\argmin}{\textrm{argmin}}
\newcommand{\R}{\mathbb R}
\newcommand{\E}{\mathbb E}
\newcommand{\tr}{\mathrm{tr}}
\newcommand{\Cnew}{\mathrm{C_{new}}}
\theoremstyle{plain}
\newtheorem{theorem}{Theorem}[section]
\newtheorem{proposition}[theorem]{Proposition}
\newtheorem{lemma}[theorem]{Lemma}
\newtheorem{corollary}[theorem]{Corollary}
\theoremstyle{definition}
\newtheorem{definition}[theorem]{Definition}
\theoremstyle{remark}
\newtheorem{remark}[theorem]{Remark}
\newcommand{\mse}[2]{\widebar{\mathrm{MSE}}^{#1, #2}}
\newcommand{\msei}[2]{\mathrm{MSE}^{i, #2}}
\newcommand{\Wsvd}{W_{\mathrm{SVD}}}
\newcommand{\Gsvd}{\Gamma_{\mathrm{SVD}}}
\newcommand{\thres}{\mathrm{thres}}
\newcolumntype{P}[1]{>{\RaggedRight\hspace{0pt}}p{#1}}
\newcolumntype{X}[1]{>{\RaggedRight\hspace*{0pt}}p{#1}}
\colorlet{linecol}{black!75}
\newcommand{\highlight}[2]{\colorbox{#1!17}{$\displaystyle #2$}}
\renewcommand{\highlight}[2]{\colorbox{#1!17}{#2}}
\newcommand{\printfnsymbol}[1]{%
  \textsuperscript{\@fnsymbol{#1}}%
}
\begin{document}

\title{Multitasking Models are Robust to Structural Failure: \\
A Neural Model for Bilingual Cognitive Reserve}
\author{%
    Giannis Daras\thanks{equal contribution.} \\
    The University of Texas at Austin \\
    giannisdaras@utexas.edu \\
    \And 
    Negin Raoof$^{*}$\\
    The University of Texas at Austin \\
    neginraoof@gmail.com \\
    \And
    Zoi Gkalitsiou \\
    The University of Texas at Austin \\
    zoi.gkalitsiou@austin.utexas.edu \\
    \And 
    Alexandros G. Dimakis \\
    The University of Texas at Austin \\
    dimakis@austin.utexas.edu \\
}
\maketitle

\begin{abstract}

We find a surprising connection between multitask learning and robustness to neuron failures. Our experiments show that bilingual language models retain higher performance under various neuron perturbations, such as random deletions, magnitude pruning and weight noise compared to equivalent monolingual ones. 
We provide a theoretical justification of this robustness by mathematically analyzing linear representation learning and showing that multitasking creates more robust representations. 
Our analysis connects robustness to spectral properties of the learned representation and proves that multitasking leads to higher robustness for diverse task vectors. We open-source our code and models in the following URL: \href{https://github.com/giannisdaras/multilingual_robustness}{https://github.com/giannisdaras/multilingual\_robustness}.

\end{abstract}

\section{Introduction}
Converging evidence from cognitive science research indicates that bilingualism increases brain robustness by reducing the rate of cognitive decline due to aging~\citep{anderson2021bilingualism,gold2013lifelong} and delaying the onset of symptoms of dementia \citep{bialystok2007bilingualism,craik2010delaying}. It appears that individuals who speak more than one language on a regular basis are able to maintain typical cognitive functioning despite neural degeneration. This mismatch between cognitive functioning and brain pathology is called Cognitive Reserve \citep{barulli2013efficiency}, and its underlying mechanisms are poorly understood and are an active topic of investigation.

Inspired by this research, we study whether \textit{artificial} neural networks are more robust when trained on multiple languages or multiple tasks. 
Our experiments demonstrate that training on multiple tasks indeed increases structural robustness. 
We train monolingual and bilingual GPT-2 models with the same architecture and dataset sizes. 
Initially, monolingual GPT-2~\citep{radford2019language} models are slightly outperforming the bilingual ones, but when we introduce structural noise (by randomly deleting neurons or adding noise to the weights) bilingual models degrade more gracefully and eventually outperform the monolingual models in the high-noise regime. For some amount of noise, bilingual models start outperforming the monolingual ones demonstrating a \textit{cross-over} in performance due to their increased robustness. We observe this phenomenon for numerous models across three different types of corruption: additive Gaussian noise to the weights, random weight pruning and magnitude-based weight pruning \citep{han}.

\begin{figure}[!t]
    \centering
    \includegraphics[width=8.5cm]{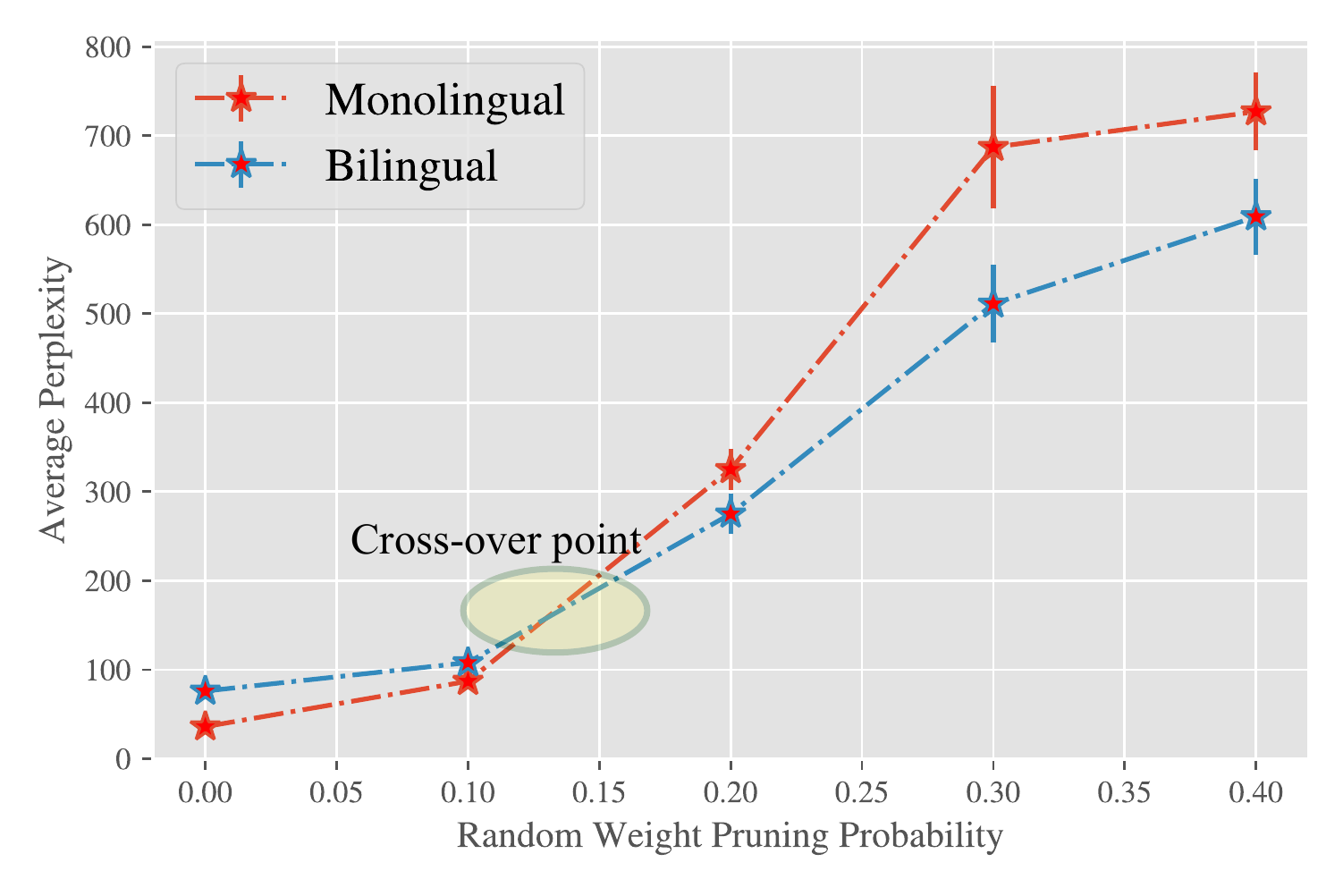}
    \caption{\small Performance of monolingual and bilingual GPT-2 models with the same architecture and training dataset size. We show the performance as we randomly erase weights. The x-axis indicates the probability of erasing an attention weight parameter (setting to it zero). The y-axis indicates the average perplexity over $20$ runs with $95$\% confidence intervals. The bilingual model initially shows slightly worse performance, but as more weights are deleted, the monolingual model declines faster and performs worse in the highly damaged regime. This indicates that the bilingual GPT-2 model is more robust to neuron weight erasures. We show similar results for several models and types of errors in our experimental section. }
   \label{fig:gpt2_random_deletion}
\end{figure}

\begin{figure}[h!]
    \centering
    \includegraphics[width=7cm]{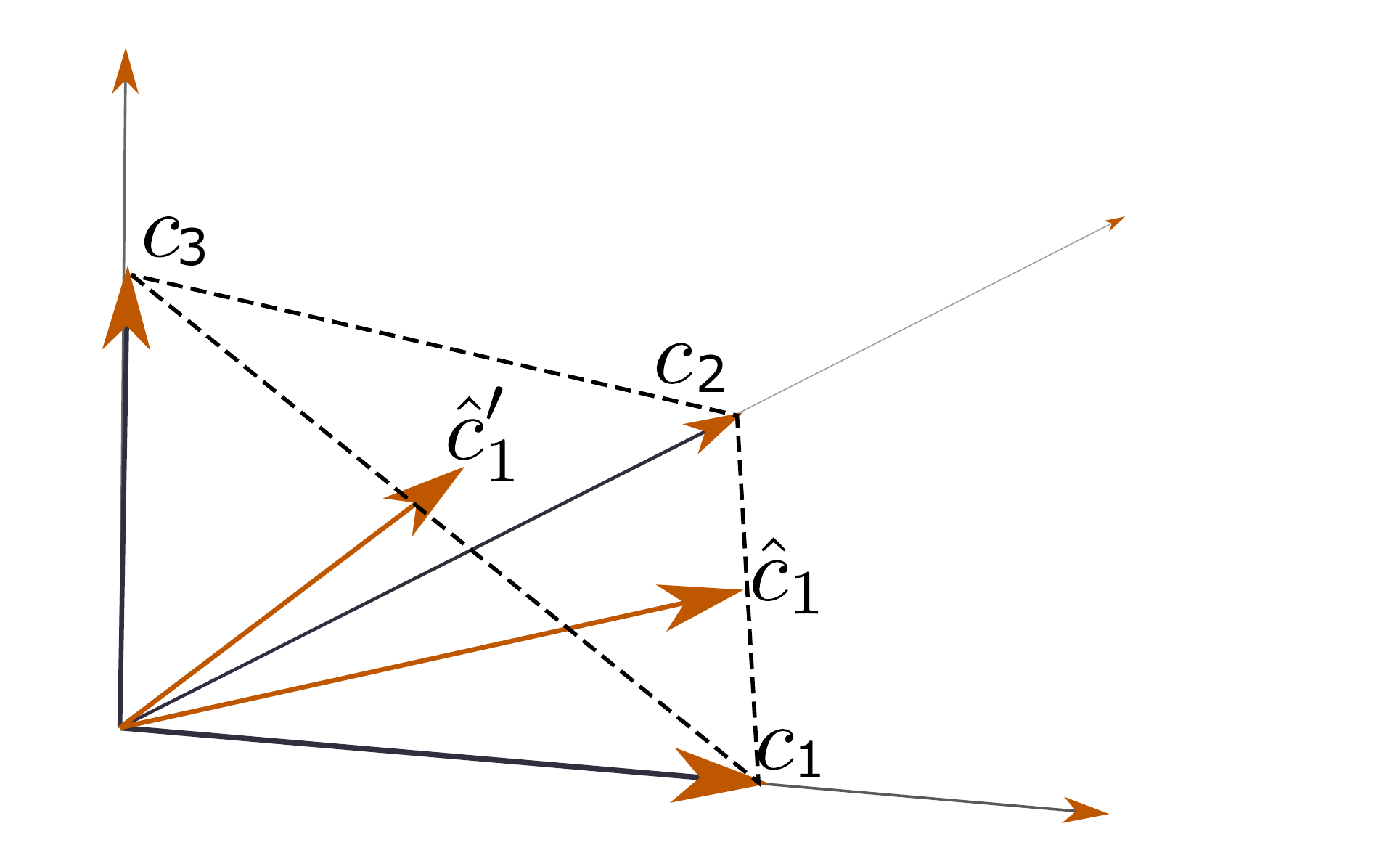}
    \caption{\small Let $c_1, c_2, c_3$ be the standard basis of $\R^3$. For two tasks, the best one dimensional approximation to $c_1,c_2$ is $\hat{c}_1= [1/2, 1/2, 0 ]^T$ but the best one dimensional approximation to three tasks $c_1,c_2,c_3$ is $\hat{c}'_1= [1/3, 1/3, 1/3 ]^T$. Multi-tasking is creating $\ell_2$ regularization since $||\hat{c}'_1||_2< ||\hat{c}_1||_2$. It is important that the original task vectors $c_1,c_2, c_3$ are orthogonal i.e. diverse, since this creates regularization.   
    }
    \label{fig:3dexample}
\end{figure}

\noindent \textbf{Our Contributions:}  
We provide a theoretical justification of this phenomenon by mathematically analyzing linear multitask representation learning~\citep{Maurer,du2020few}.  Our analysis shows that introducing more diverse tasks creates $\ell_2$ regularization in the linear task heads. Further, we formally connect the Euclidean norm of the learned representations to structural robustness under errors in the network weights. 
Our main theorem establishes that multitasking leads to higher robustness to additive noise for linear representations when the task vectors are selected as random and independent Gaussian vectors. Our results also establish that when the tasks are significantly overlapping, multitasking does not lead to higher robustness and hence task diversity is necessary.

We experimentally observe that multitasking increases structural robustness for numerous networks and multiple problems including MNIST, CIFAR10, Newsgroup20, GPT models and finetuned GPT models on GLUE tasks. We train networks under exactly comparable dataset and architecture conditions and show that models become more robust to structural failures as they are trained with more tasks. 
We experiment with three different types of structural failures and show robustness increases for all of them. 
 We also experimentally observe that the addition of diverse tasks seems to regularize the model weights, as we predict in our theoretical analysis.

\section{Theoretical Analysis}

\noindent \textbf{Building intuition.} 
We start with a small numerical example to build intuition.
Given a feature vector $x \in \R^{d}$ we compute a $k$ dimensional linear representation $W x$ using a matrix $W \in \R^{k \times d}$. We choose $W$ such that we best approximate a set of ground truth task vectors, $\{c_1, c_2, ..., c_T\}$, that lie in $\R^d$. The learned approximation is $\hat{c}_i= W^T \gamma_i$. Essentially, we use linear combinations of the columns of $W^T$ to approximate the task vectors. For simplicity, we assume that the columns of $W^T$ are unit norm.  We study the case where $k<T$, otherwise there are infinite solutions.

Assume we work in $d=3$ dimensions with $T=3$ total tasks, $c_1=[1,0,0]^T, c_2=[0,1,0]^T, c_3=[0,0,1]^T$. Set our learned representation dimension to be $k=1$ dimensional. When $T=2$, using only the first two tasks $c_1,c_2$, an optimal solution is $W= \frac{1}{\sqrt{2}} [1,1,0]$. The corresponding linear head is now the scalar $\gamma_1=\frac{1}{\sqrt{2}}= \gamma_2$ and the approximate vectors are $\hat{c}_1=W^T \gamma_1= [0.5, 0.5, 0 ]^T= \hat{c}_2$. Therefore the best one dimensional subspace to jointly approximate $c_1,c_2$ is the span of $W=\frac{1}{\sqrt{2}}[1,1,0]$. 
Now we introduce one more task and find the one dimensional subspace that best approximates $c_1,c_2,c_3$. That becomes $W'=\frac{1}{\sqrt{3}}[1,1,1]$ with linear heads $\gamma'_1=\frac{1}{\sqrt{3}}= \gamma'_2=\gamma'_3$.
The approximate vectors now are $\hat{c}'_1=(W')^T \gamma'_1= [1/3, 1/3, 1/3 ]^T= \hat{c}'_2=\hat{c}'_3$. Notice that $||\hat{c}'_i||^2 = 1/3$ for $3$ tasks but $||\hat{c}_i||^2 = 1/2$ for two tasks. The point is that for \textit{more tasks, the vector that jointly approximates all task vectors becomes shorter}. Equivalently, the $\ell_2$ norm of the linear task heads \textit{decreases} from $\gamma_i= \frac{1}{\sqrt{2}}$ to $\gamma'_i=\frac{1}{\sqrt{3}}$ as the tasks increased from two to three showing how multitasking creates regularization. A graphical representation of this example is given in Figure \ref{fig:3dexample}. It is important that the task vectors $c_i$ are orthogonal, increasing the effective dimensionality of the problem. The intuition is that diverse tasks increase the effective dimension, making the best approximation vector shorter. 

Our main theoretical result is that this phenomenon is quite general and makes multitasking lead to structural robustness. We connect the norm of the approximated task vectors with robustness to weight perturbations and show that for Gaussian, independent task vectors the average norm shrinks as more tasks are added. This is intuitive since high dimensional Gaussian vectors are near-orthogonal. Surprisingly, we empirically show that task vectors for numerous problems also exhibit this behavior.

\noindent \textbf{Analysis.}
We consider a neural network $f_\theta:\R^{d}\to\R^{k}$ and a collection of tasks $\{\mathcal T_1, ..., \mathcal T_T\}$. We are trying to learn $\theta, \gamma_i \in \R^{k}$ to solve the following optimization problem:
\begin{gather}
    \argmin_{\theta, \{\gamma_1, ..., \gamma_{T}}\} \sum_{i=1}^{T}\E_{(x, y) \in \mathcal T_i}\mathcal L(\gamma_i^Tf_\theta(x), y). 
    \label{eq:opt_problem_with_samples}
\end{gather}

The neural network $f_\theta$ can be as simple as a single matrix $W: \R^d \to \R^k$. For linear networks, we consider the following dataset generation process: for task $\mathcal T_i$, we sample a Gaussian $x$ and we generate its label $y$ by taking the inner-product with a task vector $c_i$, i.e. $y = c_i^Tx$ for task $\mathcal T_i$. Given infinite samples and MSE loss, the optimization problem of \eqref{eq:opt_problem_with_samples} is equivalent to the following problem.

\begin{definition}[Optimization Problem] Let $k < T < d$. We define the Factorized Best Rank-$k$ approximation of a matrix $C\in \R^{d\times T}$ as the optimization problem:
\begin{gather}
    W^*, \Gamma^* = \argmin_{W \in \R^{k\times d}, \Gamma\in \R^{k \times T} }\left|\left| W^T\Gamma - C\right|\right|_F^2.
    \label{eq:opt_problem}
\end{gather}

\label{def:opt_problem}
\end{definition}
We are interested in the case when the dimensionality of the representation $k$ is smaller than the number of tasks $T$, otherwise the best Rank-$k$ approximation of $C$ is not unique. 

The following Proposition states that in the considered setting, Problem \ref{eq:opt_problem} can be solved with SVD.
\begin{proposition}
For any matrix $C \in \R^{d\times T}$ with distinct singular values, any solution of \ref{def:opt_problem} satisfies:
\begin{gather}
    W^{*{^T}}\Gamma^* = U\Sigma_k V^T,
\end{gather}
where $U\Sigma V^T$ is the SVD of $C$ and $\Sigma_k$ is the same as $\Sigma$ except than the last $T-k$ diagonal entries that are zeroed out. 
\end{proposition}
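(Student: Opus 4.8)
The plan is to connect the factorized problem in \eqref{eq:opt_problem} to the classical Eckart--Young--Mirsky theorem on best low-rank approximation. First I would observe that as $W$ ranges over $\R^{k \times d}$ and $\Gamma$ ranges over $\R^{k \times T}$, the product $W^T \Gamma$ ranges over \emph{all} matrices in $\R^{d \times T}$ of rank at most $k$. One inclusion is immediate: $W^T \Gamma$ is a product of a $d \times k$ and a $k \times T$ matrix, so its rank is at most $k$. For the reverse inclusion, any matrix $M \in \R^{d \times T}$ with $\rank(M) \le k$ admits a factorization $M = AB$ with $A \in \R^{d \times k}$ and $B \in \R^{k \times T}$ (e.g.\ from its own reduced SVD or rank decomposition), and setting $W = A^T$, $\Gamma = B$ realizes $M = W^T \Gamma$. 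Hence the minimization in \eqref{eq:opt_problem} is exactly
\begin{gather}
    \min_{\substack{M \in \R^{d \times T} \\ \rank(M) \le k}} \left|\left| M - C \right|\right|_F^2,
\end{gather}
so the optimal value and the optimal product $W^{*T}\Gamma^*$ are governed by best rank-$k$ approximation in Frobenius norm.

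Next I would invoke the Eckart--Young--Mirsky theorem, which states that the unique (under distinct singular values) minimizer of $\|M - C\|_F$ over matrices of rank at most $k$ is obtained by truncating the SVD $C = U \Sigma V^T$ to its top $k$ singular values, i.e.\ $M^* = U \Sigma_k V^T$ with $\Sigma_k$ as defined in the statement. Since the product $W^{*T}\Gamma^*$ equals this optimal $M^*$, this gives the claimed identity $W^{*T}\Gamma^* = U \Sigma_k V^T$ directly. The role of the distinct-singular-values hypothesis is precisely to guarantee that the truncated matrix $M^*$ is unique: if the $k$-th and $(k+1)$-th singular values coincided, the choice of which singular directions to keep would be ambiguous and the product would not be uniquely determined.

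The main subtlety to handle carefully is the quantifier in the statement: it asserts that \emph{any} solution $(W^*, \Gamma^*)$ satisfies the SVD identity, not merely that \emph{some} solution does. The factorization $M = W^T\Gamma$ is highly non-unique (one may replace $W^T \mapsto W^T R^{-1}$ and $\Gamma \mapsto R \Gamma$ for any invertible $R \in \R^{k \times k}$), so many distinct pairs $(W^*, \Gamma^*)$ achieve the optimum. The point I would emphasize is that uniqueness is claimed only at the level of the \emph{product} $W^{*T}\Gamma^*$, not of the individual factors. This follows because the surjection onto rank-$\le k$ matrices means every optimal pair projects to the same optimal $M^*$, which Eckart--Young pins down uniquely under the distinctness hypothesis. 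I expect this quantifier bookkeeping — establishing that the factored objective and the rank-constrained objective share the same minimizing product, and that the product (though not the factors) is unique — to be the only real point requiring care; the rest is a direct appeal to the standard low-rank approximation theorem.
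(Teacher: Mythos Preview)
Your proposal is correct and follows exactly the approach the paper takes: the paper simply cites textbooks (Golub--Van Loan, Blum--Hopcroft--Kannan) for the fact that the truncated SVD gives the best rank-$k$ approximation, without spelling out the reduction from the factorized problem to the rank-constrained one. Your write-up is more detailed than the paper's own treatment---in particular, your explicit argument that $\{W^T\Gamma : W \in \R^{k \times d}, \Gamma \in \R^{k \times T}\}$ coincides with the set of rank-$\le k$ matrices, and your remark on uniqueness of the product versus non-uniqueness of the factors, fill in exactly the steps the paper leaves implicit.
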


The fact that the Singular Value decomposition computes the best rank-$k$ approximation to a matrix can be found in several textbooks e.g. \citet{golub1996matrix,blum2020foundations}.

This proposition establishes that $W^* = U^T$ and $\Gamma^* = \Sigma_kV^T$ is a valid solution of \eqref{eq:opt_problem}. Onwards, we will be calling this the SVD Solution.
\begin{definition}
We define the SVD solution of \eqref{eq:opt_problem}, to be:
\begin{gather}
        \Wsvd = U^T, \quad \Gsvd = \Sigma_kV^T.
        \label{eq:svd_solution}
\end{gather}
\end{definition}

We note that if any multitask learning algorithm is used to obtain $W^*, \Gamma^*$, one can run Gram-Schmidt to make $W^*$ orthonormal and hence 
obtain the factorization we use. It is important that $W$ stays normalized and all scaling is pushed to $\Gamma$ since to measure robustness to weight shifts, we are going to add noise to $W$ only, and higher $W$ scaling is equivalent to lower effective noise.   

We study how the performance is affected when the representation network, $f_\theta$, is corrupted.

\begin{definition}[]
For any sample $x$, the \textbf{Mean Squared Error (MSE)} for task $i$ is defined to be the expected error between the model prediction under noise and the true value $y$. Namely,
\begin{gather}
    \text{MSE}^i= \E_{\theta_c}\left[(\gamma_i^Tf_{\theta_c}(x) - y)^2 \right],
\end{gather}
where $f_{\theta_c}$ is the model that emerges after corrupting $f_{\theta}$.
\end{definition}
This measures how well the model approximates the ground truth under the presence of noise and under the constraint of a joint representation for multiple tasks.

The simplest corruption process to study is adding noise to the representation matrix, i.e. 
\begin{gather}
    W_c = W+N, \quad N_{ij} \sim \mathcal N(0, \sigma^2), \ \mathrm{i.i.d}\enspace
    \label{eq:additive_noise_model}
\end{gather}
Then, we denote the mean squared error for the task $i$ with $\msei{T}{\sigma^2}$ and the average mean squared error across the $T$ tasks with $\mse{T}{\sigma^2}$. We are now ready to introduce our results.

\begin{theorem}[Mean Squared Error for Additive Noise]
    Let $C \in \R^{d\times T}$ be a matrix with distinct singular values $\sigma_1 > \sigma_2 > ... > \sigma_T$. Let $W, \Gamma$ be the SVD solution of \eqref{eq:opt_problem}. Under the Additive Noise Model defined in \eqref{eq:additive_noise_model}, we have that: \\
    \begin{gather}
        \tikzmarknode{noisymse}{\highlight{red}{$\mse{T}{\sigma^2}$}} = \tikzmarknode{noiselessmse}{\highlight{blue}{$\mse{T}{0}$}} + \frac{\sum_{i=1}^{k}\sigma_i(C)^2}{T} \cdot \tikzmarknode{noise}{\highlight{red}{$\sigma^2$}}\enspace.
        \label{eq:th_eq}
    \end{gather}
    
    \begin{tikzpicture}[overlay,remember picture,>=stealth,nodes={align=left,inner ysep=2pt},<-]
    \path (noisymse.south) node[anchor=north east,color=red!67] (scalep){\textbf{Average MSE under noise}};
    \draw [color=red!87](noisymse.south) |- ([xshift=-0.3ex, yshift=-3ex,color=red]scalep.north west);
    
    \path (noiselessmse.north) ++ (0, 7pt) node[anchor=south west,color=blue!67] (scalep){\textbf{Average MSE without noise}};
    \draw [color=blue!87](noiselessmse.north) |- ([xshift=-0.3ex,color=blue]scalep.south east);
    
    \path (noise.south) node[anchor=north west,color=red!67] (scalep){\textbf{Noise Variance}};
    \draw [color=red!87](noise.south) |- ([xshift=-0.3ex, yshift=-3ex,color=red]scalep.north east);
    
    \end{tikzpicture}

    \label{th:additive_noise}
\end{theorem}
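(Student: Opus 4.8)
The plan is to compute the per-task mean squared error $\msei{}{\sigma^2}$ in closed form and then average over the $T$ tasks, keeping track of the two independent sources of randomness: the weight noise $N$ and the data point $x$. First I would fix a task $i$ and write the prediction of the corrupted model as $\gamma_i^T(W+N)x$. Subtracting the true label $y=c_i^Tx$, the error splits as $(W^T\gamma_i-c_i)^Tx + \gamma_i^TNx$, where the first summand is deterministic in $N$ and the second is linear in $N$. Squaring and taking $\E_N$, the cross term carries the factor $\E_N[N]=0$ and therefore vanishes, so $\msei{}{\sigma^2}$ decomposes into a noiseless (bias) contribution plus a pure noise contribution.

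The noiseless contribution is exactly $\msei{}{0}$ by definition. For the noise contribution I would expand $\E_N[(\gamma_i^TNx)^2]$ in coordinates: since the entries satisfy $\E_N[N_{ab}N_{a'b'}]=\sigma^2\delta_{aa'}\delta_{bb'}$, all off-diagonal terms drop and one is left with $\sigma^2\|\gamma_i\|^2\|x\|^2$. Using the unit scale of the data-generating process (so that $\E\|x\|^2=1$, equivalently unit-norm inputs), this reduces to $\sigma^2\|\gamma_i\|^2$, giving $\msei{}{\sigma^2}=\msei{}{0}+\sigma^2\|\gamma_i\|^2$.

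Averaging over the $T$ tasks then yields $\mse{T}{\sigma^2}=\mse{T}{0}+\frac{\sigma^2}{T}\sum_{i=1}^T\|\gamma_i\|^2=\mse{T}{0}+\frac{\sigma^2}{T}\|\Gamma\|_F^2$, since the $\gamma_i$ are the columns of $\Gamma$. The last step is to evaluate $\|\Gamma\|_F^2$ at the SVD solution $\Gamma=\Gsvd=\Sigma_kV^T$: because $V$ has orthonormal columns and $\Sigma_k$ keeps only the top $k$ singular values, $\|\Sigma_kV^T\|_F^2=\tr(\Sigma_kV^TV\Sigma_k^T)=\sum_{i=1}^k\sigma_i(C)^2$. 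Substituting gives the claimed identity.

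Since every step is an elementary second-moment computation, the main thing to get right is the bookkeeping of the two randomness sources and, crucially, the data normalization. The hard part will be the noise term, which naturally produces a factor $\E\|x\|^2$: one must invoke the unit-scale convention of the data model so that the coefficient of $\sigma^2$ comes out as $\frac{1}{T}\sum_{i=1}^k\sigma_i(C)^2$ rather than being inflated by the ambient dimension $d$. Verifying that the cross term vanishes and that $\E_N[(\gamma_i^TNx)^2]=\sigma^2\|\gamma_i\|^2\|x\|^2$ are the only places requiring care; everything else follows from orthonormality of the singular vectors.
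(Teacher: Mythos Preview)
Your proposal is correct and follows essentially the same route as the paper: expand the squared error, use $\E_N[N]=0$ to kill the cross term, compute $\E_N[(\gamma_i^TNx)^2]=\sigma^2\|\gamma_i\|^2\|x\|^2$, invoke unit-norm $x$, average over tasks, and finally evaluate $\|\Gamma\|_F^2=\sum_{i=1}^k\sigma_i(C)^2$ at the SVD solution. The only cosmetic difference is that the paper packages the noise term as $x^T\E_N[N^T\Gamma\Gamma^T N]x=\sigma^2\tr(\Gamma\Gamma^T)$ with $\Gamma\Gamma^T=\Sigma_k^2$, whereas you compute $\|\gamma_i\|^2$ task-by-task and sum; these are the same computation.
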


As shown, the noisy MSE decomposes into the sum of the noiseless MSE plus the noise variance times a function that depends on the number of tasks: 
\begin{gather}
    R(T)=\frac{\sum_{i=1}^{k}\sigma_i(C)^2}{T}.
    \label{eq:slope}
\end{gather}

It is important to emphasize that as more tasks are added, the matrix $C$ changes, but the interlacing theorem allows us to connect the singular values of smaller submatrices, as discussed in the Appendix. $R(T)$ is the robustness slope: if a model with $T$ tasks has smaller slope, it will eventually outperform a model with, say $T-1$ tasks and larger slope, for sufficiently large noise. This is true even if the noiseless performance for the $T-1$-task model is better, indicating a cross-over in MSE. 
Therefore the key is understanding when the sum of the top $k$ singular values of $C$ scales sublinearly in $T$. This is not true for tasks that are aligned, but we can show it holds for independent Gaussian task vectors. We believe it holds for more general families of diverse task vectors and our experiments verify it also holds for numerous real task vectors learned from text and vision datasets.

\paragraph{Connection with $l_2$ regularization.} For the SVD solution (see Definition \ref{eq:svd_solution}), the sum of the top-k singular values squared is the squared Frobenius norm of $\Gamma$. Indeed, we have that $||\Gamma_{\mathrm{SVD}}||_F^2 = ||\Sigma_k V^T||_F^2$. Since $\Sigma_k$ is a diagonal matrix, each row of $\Sigma_k V^T$ is a rescaling of the corresponding row of $V^T$. Rows of $V^T$ have norm $1$, hence the i-th row of $\Sigma_kV^T$ will have norm $\sigma_i$. The Frobenius norm squared is just the sum of the squared norms of the rows. Hence, we get that 
\begin{gather}
    ||\Gamma_{\mathrm{SVD}}||_F^2 = \sum_{i=1}^{k}\sigma_i(C)^2.
\end{gather}

Using this simple observation, we can get the following alternative expression of Theorem \ref{th:additive_noise}.

\begin{corollary}[]
    Let $C \in \R^{d\times T}$ be a matrix with distinct singular values. Let $W, \Gamma$ be the SVD solution of \eqref{eq:opt_problem}. Under the Additive Noise Model defined in \eqref{eq:additive_noise_model}, we have that:
    \begin{gather}
        \mse{T}{\sigma^2} = \mse{T}{0} + \frac{||\Gamma||_F^2}{T} \sigma^2\enspace.
    \end{gather}
    \label{cor:additive_noise}
\end{corollary}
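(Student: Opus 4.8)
The plan is to obtain the corollary directly from Theorem \ref{th:additive_noise} by reinterpreting the coefficient $\sum_{i=1}^{k}\sigma_i(C)^2$ that multiplies $\sigma^2$. The theorem already establishes the full decomposition of the noisy average MSE into the noiseless term plus a noise-dependent term whose slope is $\frac{1}{T}\sum_{i=1}^{k}\sigma_i(C)^2$. All that remains is to show that this sum of squared singular values equals $||\Gamma||_F^2$ for the SVD solution, after which a substitution yields the claimed formula. Since the corollary hypothesizes that $W, \Gamma$ are the SVD solution, I may take $\Gamma = \Gsvd$ throughout.

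First I would recall that the SVD solution sets $\Gamma = \Gsvd = \Sigma_k V^T$, where $C = U\Sigma V^T$ and $\Sigma_k$ agrees with $\Sigma$ except that the last $T-k$ diagonal entries are zeroed. Since $\Sigma_k$ is diagonal with entries $\sigma_1(C), \dots, \sigma_k(C), 0, \dots, 0$, left-multiplying $V^T$ by $\Sigma_k$ simply rescales its $i$-th row by $\sigma_i(C)$ for $i \le k$ and annihilates the remaining rows. The rows of $V^T$ are the columns of the orthogonal matrix $V$, hence of unit norm, so the $i$-th row of $\Sigma_k V^T$ has Euclidean norm $\sigma_i(C)$ for $i \le k$ and zero otherwise. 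Because the squared Frobenius norm is the sum of the squared row norms, this gives
\begin{gather}
    ||\Gamma||_F^2 = \sum_{i=1}^{k}\sigma_i(C)^2.
\end{gather}

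Substituting this identity into the coefficient of $\sigma^2$ in the statement of Theorem \ref{th:additive_noise} immediately produces
\begin{gather}
    \mse{T}{\sigma^2} = \mse{T}{0} + \frac{||\Gamma||_F^2}{T}\sigma^2,
\end{gather}
which is exactly the assertion of the corollary. There is essentially no obstacle at this stage: the genuine analytic content — taking the expectation over the i.i.d.\ Gaussian perturbation $N$ and showing that the cross terms vanish so that only the top-$k$ spectral energy survives — is already carried out in the proof of Theorem \ref{th:additive_noise}. The only point requiring a sentence of care is the bookkeeping identification of $\Gamma$ in the corollary with the SVD factor $\Gsvd$, which legitimizes applying the Frobenius-norm identity above; everything else is a one-line rewriting.
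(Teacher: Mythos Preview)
Your proposal is correct and matches the paper's argument essentially verbatim: the paper also derives the corollary by observing that for the SVD solution $\Gamma=\Sigma_k V^T$ each row of $V^T$ has unit norm, so $||\Gamma||_F^2=\sum_{i=1}^k\sigma_i(C)^2$, and then substitutes this into Theorem~\ref{th:additive_noise}. There is nothing to add or correct.
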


Corollary \ref{cor:additive_noise} provides two important insights: i) the normalization with the number of tasks that appears in \eqref{eq:th_eq} is justified since the Frobenius norm of $\Gamma$ grows with the number of task, ii) if we can prove that the slope (defined in Equation \eqref{eq:slope}) is dropping, then we are effectively proving that multitasking gives $l_2$ regularization as we showed in the toy introductory example. This also holds for the case of Gaussian, i.i.d. task vectors, as shown in the following theorem.

\begin{theorem}
Let $C \in \R^{d\times T}$ be a random matrix with Gaussian, i.i.d. entries of variance $1/d$ and $d = \Omega(T^3)$. Let $C_t, C_{t+1}$ be the matrices formed by selecting the first $t, (t+1)$ columns of $C$. Then, there is a noise level $\sigma_{\thres}$ such that with probability $\geq 1 - \exp\left(-\Omega\left(\sqrt{d}\right)\right)$, the SVD solutions (see \eqref{eq:svd_solution}) of \eqref{eq:opt_problem} (for $C_t, C_{t+1}$ respectively), under the noise corruption model, satisfy:
\begin{gather}
    \mse{t+1}{\sigma^2} < \mse{t}{\sigma^2},\quad \forall  \, \sigma \geq \sigma_{\thres}.
\end{gather}
\label{theorem:additive_noise_random}
\end{theorem}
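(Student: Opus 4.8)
The plan is to reduce the whole statement, via Corollary~\ref{cor:additive_noise}, to a single deterministic inequality between \emph{robustness slopes}, and then to establish that inequality with high probability using concentration of the singular values of a tall Gaussian matrix. Write $S_k(M)=\sum_{i=1}^k\sigma_i(M)^2$ and, for $s$ tasks, let $R(s)=S_k(C_s)/s$ be the slope of Equation~\eqref{eq:slope}. By Corollary~\ref{cor:additive_noise}, for each $s\in\{t,t+1\}$ the average error is affine in the noise variance, so subtracting the two instances gives
\begin{gather}
    \mse{t+1}{\sigma^2}-\mse{t}{\sigma^2}=\bigl(\mse{t+1}{0}-\mse{t}{0}\bigr)+\bigl(R(t+1)-R(t)\bigr)\sigma^2 .
\end{gather}
The bracketed intercept is a fixed finite number, so if I can show $R(t+1)<R(t)$ then the right-hand side is strictly decreasing in $\sigma^2$ and tends to $-\infty$; hence it becomes and stays negative past the explicit threshold $\sigma_{\thres}^2=\max\{0,(\mse{t+1}{0}-\mse{t}{0})/(R(t)-R(t+1))\}$, which is exactly the claim. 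Everything therefore reduces to the event $\{R(t+1)<R(t)\}$.

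To see why this holds, first consider the idealized case in which the columns of $C$ are exactly orthonormal: then every $\sigma_i(C_s)^2=1$, so $S_k(C_s)=k$ and $R(s)=k/s$ is strictly decreasing, with gap $R(t)-R(t+1)=k/\bigl(t(t+1)\bigr)$. Real Gaussian task vectors are only \emph{near}-orthonormal, so the plan is to show all squared singular values stay in a thin band $[1-\eta,1+\eta]$ around $1$, with $\eta$ small enough that this $\Theta(1/t^2)$ gap survives. Concretely, I would apply a Gaussian concentration bound (Davidson--Szarek, i.e.\ Lipschitz concentration of extreme singular values) to the full $d\times T$ matrix $C$ with i.i.d.\ $\mathcal N(0,1/d)$ entries: its largest and smallest singular values concentrate around $1\pm\sqrt{T/d}$, so that with probability at least $1-\exp(-\Omega(\sqrt d))$ one has $\sigma_i(C)^2\in[1-\eta,1+\eta]$ for all $i$, with $\eta=o(1/T)$ precisely because $d=\Omega(T^3)$.

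The role of the interlacing theorem is to upgrade this single high-probability event on $C$ into a simultaneous statement about both submatrices without any union bound: deleting columns interlaces the singular values, $\sigma_{i+(T-s)}(C)\le\sigma_i(C_s)\le\sigma_i(C)$, so every singular value of every $C_s$ already lies in $[\sqrt{1-\eta},\sqrt{1+\eta}]$. Consequently $S_k(C_t),S_k(C_{t+1})\in[k(1-\eta),k(1+\eta)]$, and therefore
\begin{gather}
    R(t+1)\le\frac{k(1+\eta)}{t+1},\qquad R(t)\ge\frac{k(1-\eta)}{t}.
\end{gather}
A one-line rearrangement shows $R(t+1)<R(t)$ whenever $(2t+1)\,\eta<1$; since $t<T$ and $\eta=o(1/T)$, this inequality holds on the concentration event, completing the argument.

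The main obstacle is this quantitative concentration step, namely forcing $\eta$ genuinely below $1/(2t+1)$ while keeping the failure probability at $\exp(-\Omega(\sqrt d))$. The deterministic Marchenko--Pastur edge already contributes a deviation of order $\sqrt{T/d}$, which is $\Theta(1/T)$ exactly at the borderline $d\asymp T^3$; this is why the statement needs $d=\Omega(T^3)$ with a sufficiently large constant, so that the edge deviation is only a small fraction of $1/(2t+1)$, and why the fluctuation scale must be balanced carefully against the $\exp(-\Omega(\sqrt d))$ tail. Everything else---the reduction to slopes, the interlacing transfer, and the final rearrangement---is routine once this band estimate is in hand.
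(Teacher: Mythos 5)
Your route is essentially the paper's route: both reduce the theorem, via the affine decomposition of Theorem \ref{th:additive_noise}, to the slope inequality $R(t+1)<R(t)$ for the quantity in \eqref{eq:slope}, and both prove that inequality by combining interlacing (Lemma \ref{lemma:interlacing}) with the Gaussian singular-value concentration of Lemma \ref{lemma:gaussian_spectral_concentration}. Your only structural deviation is bookkeeping: you push a uniform spectral band of the full matrix $C$ down to $C_t,C_{t+1}$ by interlacing, while the paper uses interlacing to reduce to the single spread $\sigma_1^2(C_{t+1})-\sigma_k^2(C_t)$; this costs you a factor of $k$ of slack (the paper needs the spread $\lesssim k/t$, you need $\eta\lesssim 1/t$), but that is minor. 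The genuine gap is the step you yourself call ``the main obstacle,'' and it cannot be closed at the stated parameters with the tools invoked. With Lemma \ref{lemma:gaussian_spectral_concentration}, a failure probability of $\exp(-\Omega(\sqrt d))$ forces the fluctuation parameter $\alpha=\Omega(d^{-1/4})$, hence $\eta=\Omega(d^{-1/4})=\Omega(T^{-3/4})$ when $d=\Theta(T^3)$; then $(2t+1)\,\eta=\Omega(T^{1/4})\to\infty$ for $t\asymp T$, so your condition $(2t+1)\,\eta<1$ fails no matter how large the constant in $d=\Omega(T^3)$ is. Constants absorb the Marchenko--Pastur edge term $\sqrt{T/d}=O(1/T)$, as you note, but they cannot absorb the fluctuation term, whose scale is dictated by the tail you demand. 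Conversely, enforcing $\eta<1/(2t+1)$ forces $\alpha\lesssim 1/t$, which at $t\asymp T$, $d\asymp T^3$ yields failure probability only $\exp(-\Omega(d/T^2))=\exp(-\Omega(d^{1/3}))$. So the proposal as written proves the claim only for $t=O(d^{1/4})$, or under the stronger assumption $d=\Omega(T^4)$, or with the weaker tail $1-\exp(-\Omega(d^{1/3}))$.

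You did, however, put your finger on the true crux, and it is worth knowing that the paper's own proof clears this hurdle only through an algebra slip. The paper expands $(A+u)^2-(A-v)^2$, with $A=1+o(1)$, $u=\sqrt{(t+1)/d}+\alpha$, $v=\sqrt{t/d}+\alpha$, as $2Auv+u^2-v^2$, whereas the correct expansion is $2A(u+v)+u^2-v^2$. The spurious product $uv$ makes the spread look quadratic in the deviations, so the choice $\alpha=((t+1)/d)^{1/4}$ appears to give spread at most $20\sqrt{(t+1)/d}$ while keeping the tail $\exp(-\Omega(\sqrt d))$; with the correct linear expansion the spread is $\Theta(\alpha)$, and the paper's sufficient condition $\sigma_k^2(C_t)\geq \frac{t}{k}\left(\sigma_1^2(C_{t+1})-\sigma_k^2(C_t)\right)$ runs into exactly the conflict you identified: $t\alpha\lesssim k$ versus $\alpha\gtrsim d^{-1/4}$. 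Rescuing the advertised combination of $d=\Omega(T^3)$ and tail $\exp(-\Omega(\sqrt d))$ would require edge concentration sharper than Lemma \ref{lemma:gaussian_spectral_concentration}, namely deviations of size $\delta$ beyond the edge with probability $\exp(-\Omega(d\,\delta^{3/2}))$ (Tracy--Widom-type large deviations), for which $\delta\asymp 1/T\asymp d^{-1/3}$ gives precisely $\exp(-\Omega(\sqrt d))$; failing that, one must settle for one of the weakenings above. In short: your diagnosis of where the difficulty sits is exactly right, but the proposal (like the paper) does not actually close that step.
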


\begin{remark}
In words, this result shows that adding new tasks gives \textbf{provably} increased robustness to high noise corruption in the weights, when the task vectors are Gaussian.
\end{remark}
\begin{remark}
Observe that the MSE under noise drops for \textit{every single new task added}. The assumption $d = \Omega(T^3)$, can be relaxed to $d=\Omega(t^3)$, and we get increased robustness for the first $t$ added tasks. Nevertheless, for most applications $d=\Omega(T^3)$ is a realistic assumption: Even for our smallest dataset MNIST $d=728$, and we experiment with up to $10$ tasks. 
\end{remark}

\section{Experimental Evaluation}
\label{sec:experimental_eval}
We divide the experimental section in two parts.
In the first part, we add noise to the final linear representation layer of various networks and verify that our theoretical analysis agrees with experimentally observed multitasking robustness on real datasets (MNIST, CIFAR10, NewsGroup20). In the second part, we show that multitasking leads to robustness to general weight corruptions
in any layer of a complex transformer. Specifically, we show that multilingual Language Models are more robust to weight shifts (across all the layers) compared to monolingual trained under the same setting. This is the first evidence of increased Cognitive Reserve in bilingual artificial neural networks.

\paragraph{Experiments with Linear Representation Layers.} We perform experiments on three datasets (MNIST, CIFAR10, Newsgroup20) and two modalities (Vision and Language). 
The datasets normally involve one classification task each.
We create multiple binary tasks by distinguishing between pairs of labels. For example, in CIFAR10, one task might be to distinguish between dogs and cats and another between airplanes and cars. We assign a value in $[0, 1]$ to each sample for each task to transform them to regression tasks (to match our theory). For example, if task $i$ is to distinguish between dogs and cats, value $0$ corresponds to dog and value $1$ to cat.

The second issue is learning the task vectors from training data. For MNIST, we can simply learn a linear layer $C$ with columns $\{c_1, ..., c_T\}$ such that: $c_i^Tx \approx y$ for each task. For more complex datasets like CIFAR or Newsgroup20, linear networks have lower performance and hence it is less interesting to examine their robustness. Instead, we first use another network to extract representations $g_\theta(x)$ and then learn a linear layer acting on the encodings such that $c_i^Tg_{\theta}(x)\approx y$. For CIFAR we used a pre-trained Resnet50 as the encoder while for NewsGroup, a pre-trained BERT~\citep{devlin2018bert}. We would like to point out that our theory is still valid for this case -- this is equivalent to the linear layer $C$ receiving inputs from a learned representation as opposed to the features directly. 
As the number of tasks increase, we reduce the number of training examples per task. We do this to make sure that the total training dataset size stays the same as the number of tasks increase.

Figure \ref{fig:mse_linear} shows how the average MSE behaves as noise increases for different number of tasks. Note that even though all models begin from roughly the same performance in the noiseless setting, the multitask models are much more robust to the corruption of their weights consistently among all the datasets and modalities. This is aligned with our theoretical analysis which predicts that the robustness slope (defined in Equation \eqref{eq:slope}) decreases with the number of tasks. We calculate robustness slopes for learned task vectors for real datasets and plot their decay in the Appendix, where we further include all the details of how these models were trained.

\begin{figure*}[!ht]
\begin{center}
\begin{subfigure}
\centering
    \includegraphics[width=4.3cm]{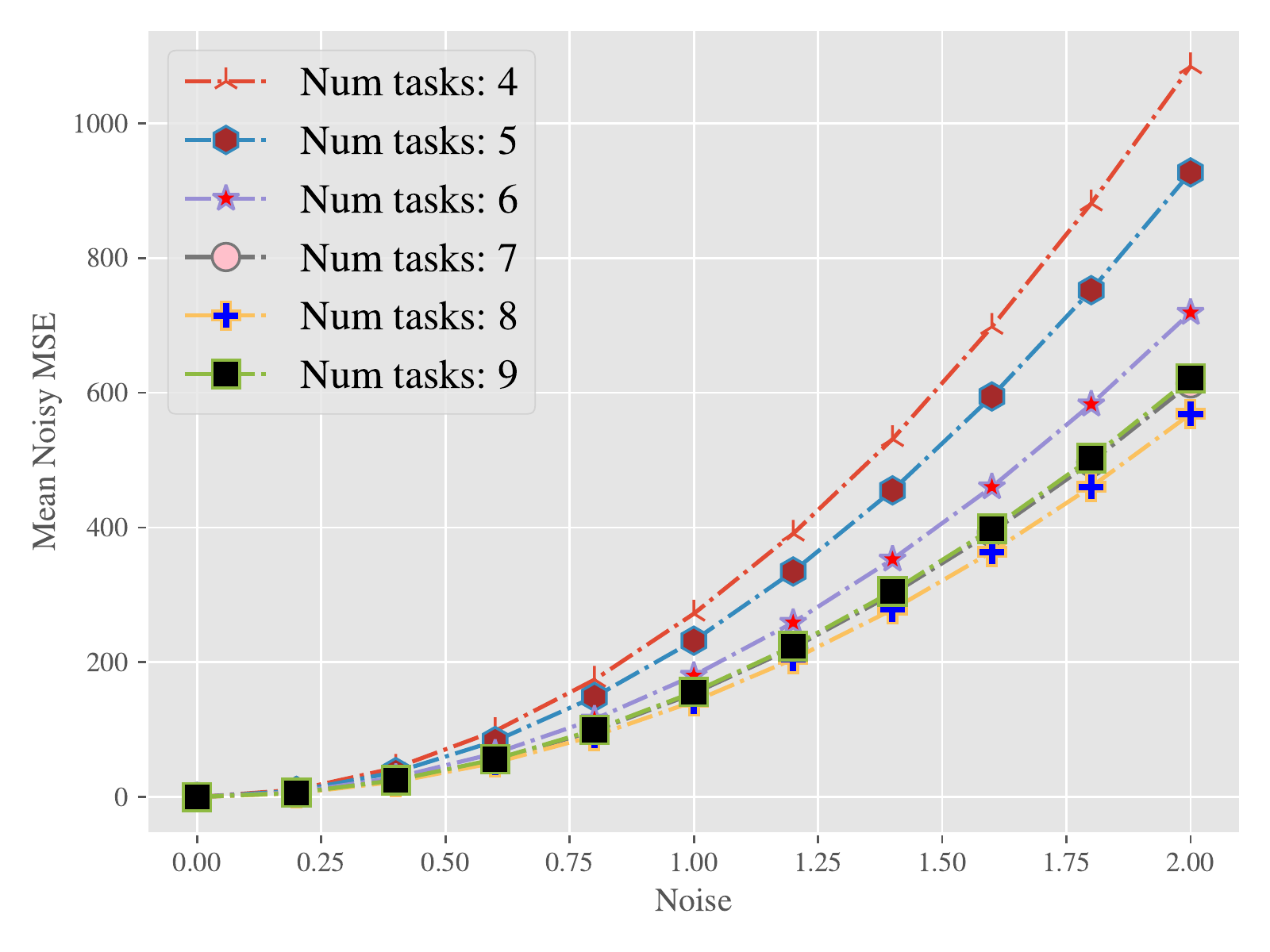}
\end{subfigure}
\begin{subfigure}
\centering
   \includegraphics[width=4.3cm]{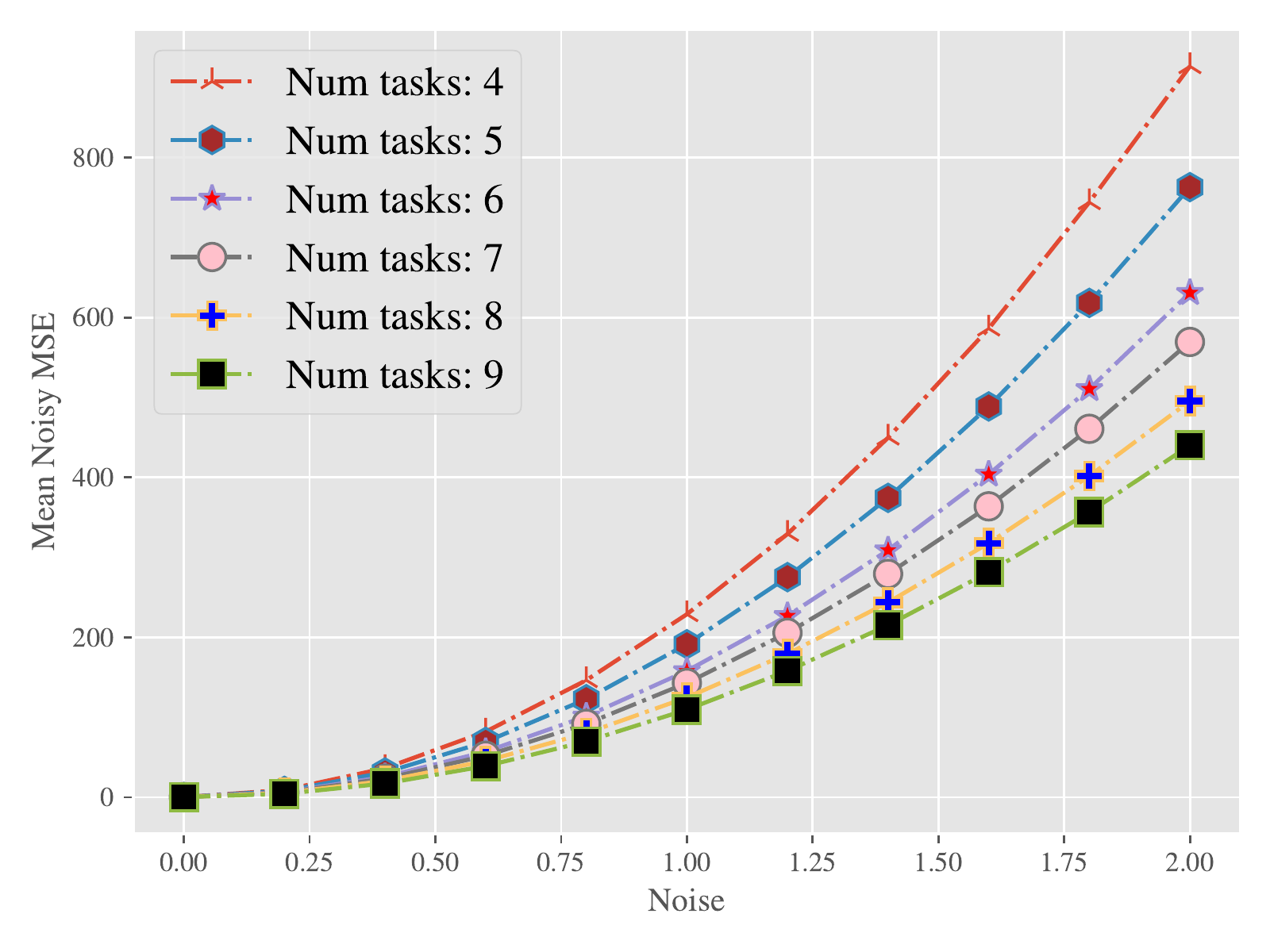}
\end{subfigure}     
\begin{subfigure}
\centering
\includegraphics[width=4.3cm]{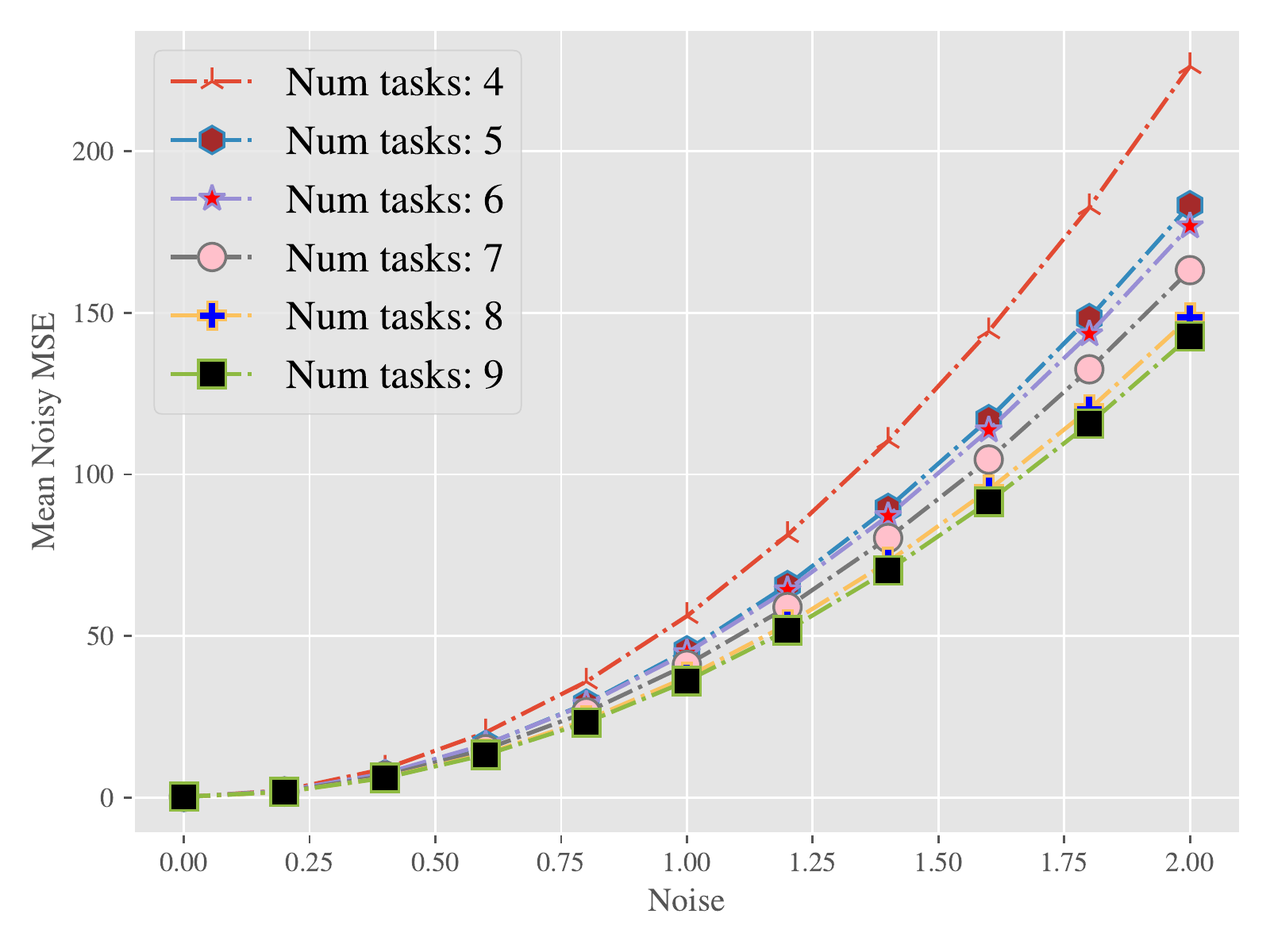}
\end{subfigure} 
\caption{\small{MSE of model (versus optimal task vector) as a function of noise added to the weights. From left to right: MNIST, CIFAR10, NewsGroup20. As shown for all these datasets, adding tasks increases the robustness of the model to noise in the weights. }}
\label{fig:mse_linear}
\end{center}
\end{figure*}

\paragraph{Experiments with Language Models.}
\label{sec:training_exp}
Our objective is to compare robustness to neural weight perturbations in monolingual and bilingual language models. We use the following perturbation models: 1) Random deletion of weight parameters:  we zero-out $p$ percent of the attention layer weights, 2) Magnitude pruning: we sort model attention weights by the magnitude and delete the smallest p percent of weights~\citep{han}, 3) Random normal noise: we add zero-mean random Gaussian noise with standard deviation $\sigma^2$ to the attention weights.

On the selection of the linguistic pair, we selected 
Greek, a highly inflected language with very different morphology, syntax and phonology compared to English. It also uses a different script since Greek characters were not Romanized. This minimizes transfer between languages, something we wanted to avoid. In the Appendix, we present additional experiments for other Romance languages.

The dataset for the bilingual model is a concatenation of articles from English and Greek Wikipedia. To avoid the computational cost of training for a new language, we start from the pre-trained GPT-2 (small)\citep{radford2019language} and we use the Language Model Recycling Technique, introduced in \citep{gpt_recycling}. GPT-2 small is a transformer-based architecture for causal language modeling, with $12$ attention blocks and $124$M parameters. The tokenizer uses Byte Pair Encoding and has a vocabulary of $50,257$ tokens. For the bilingual model, we generate a new tokenizer, vocabulary and embedding layer without changing the architecture. We keep the vocabulary size the same, as changing the vocabulary size can affect the scale of the perplexity score for these models. Note that Wikipedia documents were not in the original training of GPT-2, but our monolingual baseline was subsequently finetuned on English Wikipedia. Details on all our training hyperparameters are included in the Appendix.

We measure the quality of generated text using perplexity. Our bilingual model achieves $89$ perplexity on a randomly picked subset of the OSCAR~\citep{OrtizSuarezSagotRomary2019} dataset and $76$ perplexity on the English IMDB dataset \citep{maas-etal-2011-learning}. Monolingual GPT-2 model achieves $36$ perplexity on the IMDB dataset. In the Appendix we include generated text for both the models. Although the perplexity of the bilingual model does not match the pre-trained GPT-2, the generated text is of reasonable quality text in both languages.

\paragraph{Text Generation.}
Our first experiment is to compare the performance of both models under various parameter perturbations. First, we try deleting a random portion $p$ ($p$ from $0\%$ to $40\%$) of attention layers' weight to observe and compare the trend of decay in text generation quality between the two models. We evaluate both models on the IMDB dataset. As the graph in Figure \ref{fig:gpt2_random_deletion} shows, the monolingual model starts with text predictions closer to the source text, resulting in lower perplexity without noise. However, as we delete a more significant portion of weights, the bilingual model matches the performance of the monolingual one and eventually outperforms that.

\begin{figure*}[!ht]
\begin{center}
\begin{subfigure}
    \centering
    \includegraphics[width=6.9cm]{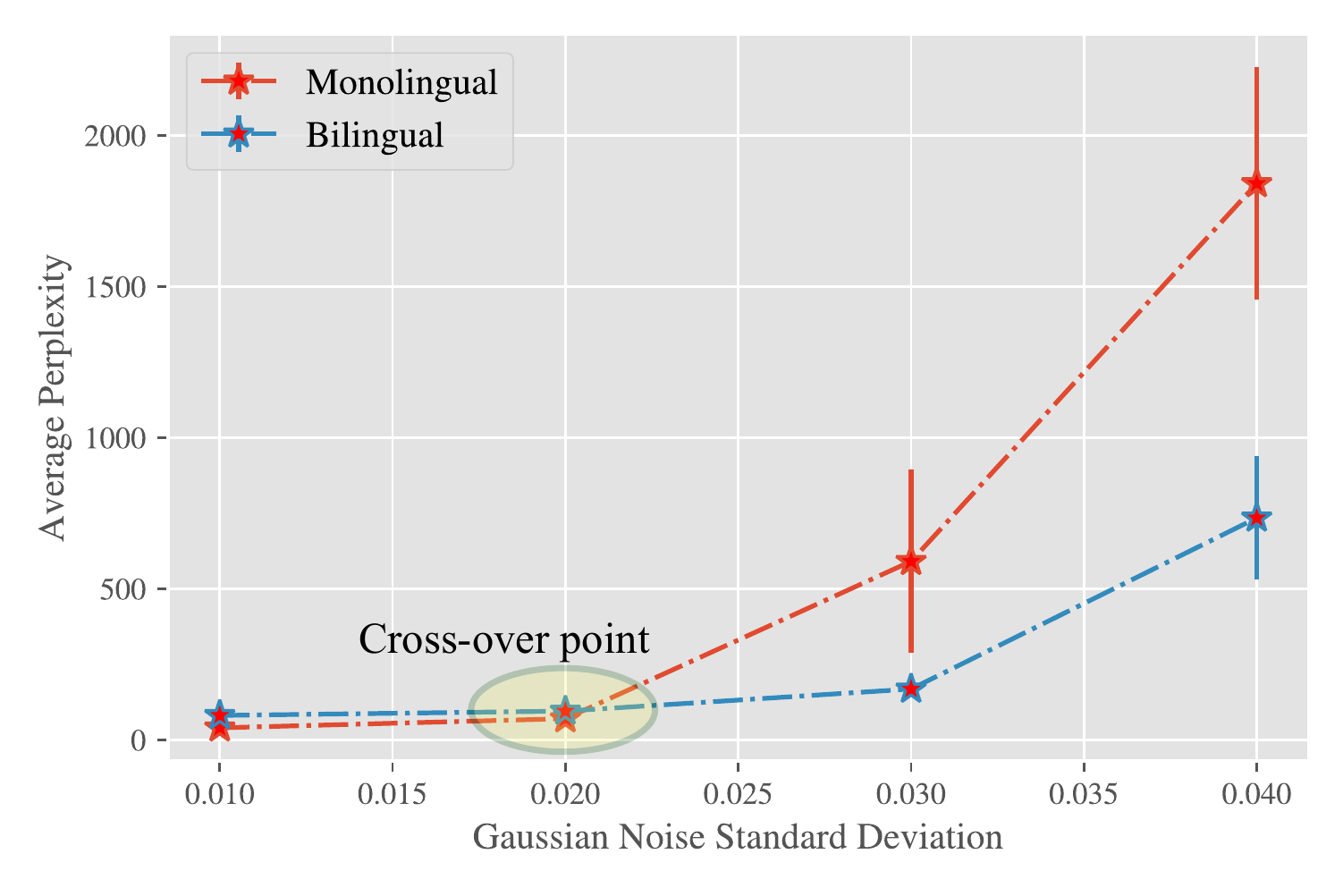}
\end{subfigure}
\begin{subfigure}
    \centering
    \includegraphics[width=6.9cm]{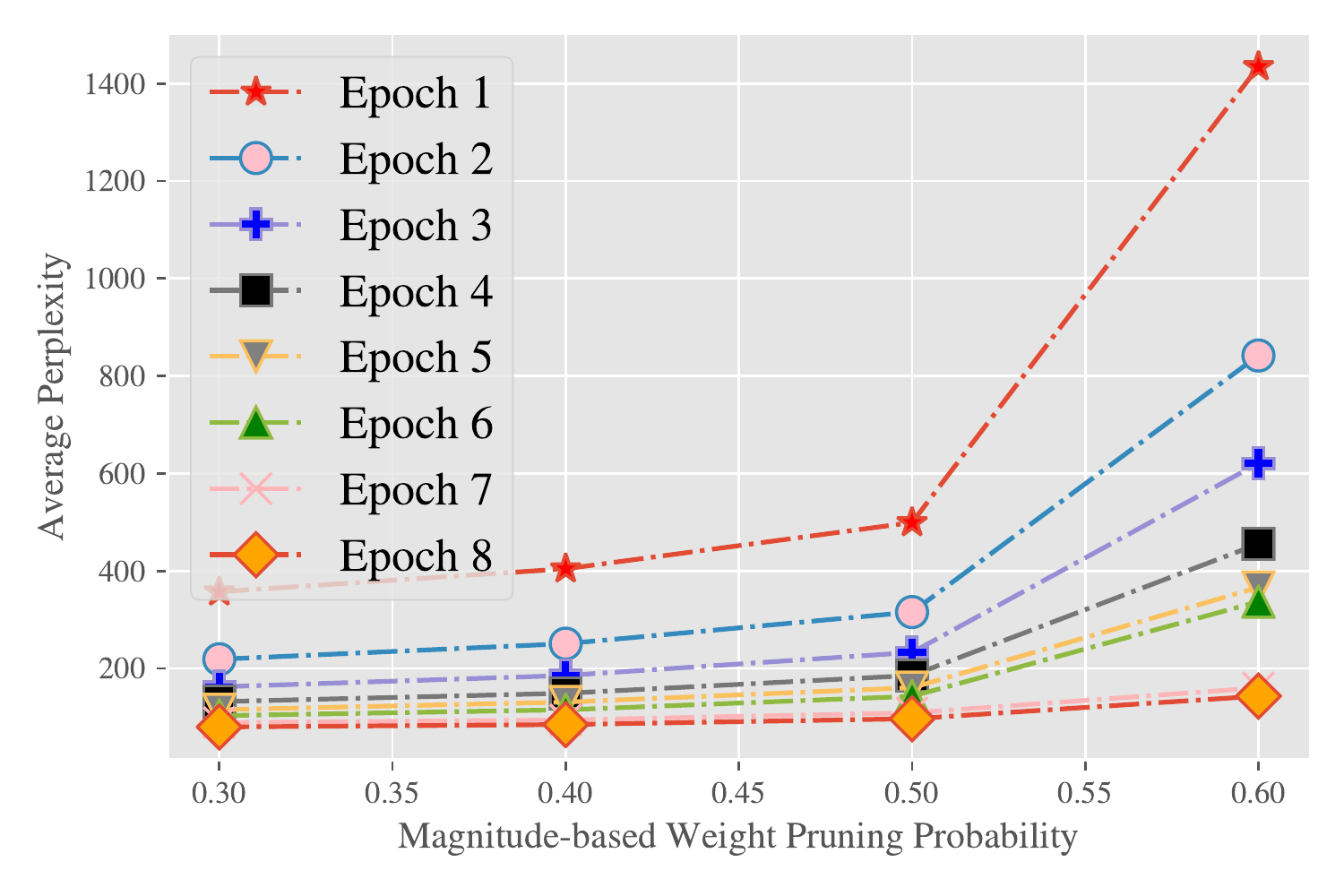}
\end{subfigure}
\end{center}
    \caption{\small Robustness to magnitude-based weight pruning and additive Gaussian noise. When plotting perplexity under additive Gaussian noise, x-axis indicates the standard deviation of noise added to weights. Y-axis indicates the average perplexity over $20$ runs with $95$\% confidence intervals. The second plot shows perplexity as we delete more weights based on magnitude, for the bilingual model at each epoch. X-axis indicates the probability of deleting sorted attention weight parameters. After only one epoch, the model shows higher sensitivity to weight perturbations. However, after eight epochs of training, it becomes more robust.}
\label{fig:gpt2_magnitude_deletion}
\end{figure*}

Next, we try magnitude-based pruning of a portion of weights, $p$, to observe and compare the trend of decay in text generation quality between the two models. We sort the attention layer weights by the magnitude and set $p$ percent of weights with the lowest magnitude to zero. Again, we use the IMDB dataset to evaluate models. The graphs in Figure \ref{fig:gpt2_magnitude_deletion} show that as the training process continues, the model achieves a lower perplexity. Moreover, pruning additional weights has a less substantial impact on the model's performance. This graph shows that training the pre-trained GPT-2 model for a few epochs on a bilingual dataset significantly improves robustness to weight perturbations.

In another experiment, we observe how the maximum singular value of the weight matrices changes throughout training process. We track the maximum singular value of attention layer weights. We use a pretrained GPT-2 model baseline, and train this model for 16k iterations on English text data from Wikipedia. Resuming from this checkpoint, we train two new models: 1) We continue training model 1 on task 1 (English Wikipedia dataset) for 16k more iterations. 2) We train a second model on a different English dataset, the LAMBADA dataset \citep{paperno2016lambada},  for 16k more iterations. Figure \ref{fig:sv_drop} indicates the results of this experiment by plotting maximum singular values of the first attention layer. As the Figure shows, training model on a new dataset (task 2) results in a faster decay of the maximum singular value.

\begin{figure}[ht]
    \centering
    \includegraphics[width=7.5cm]{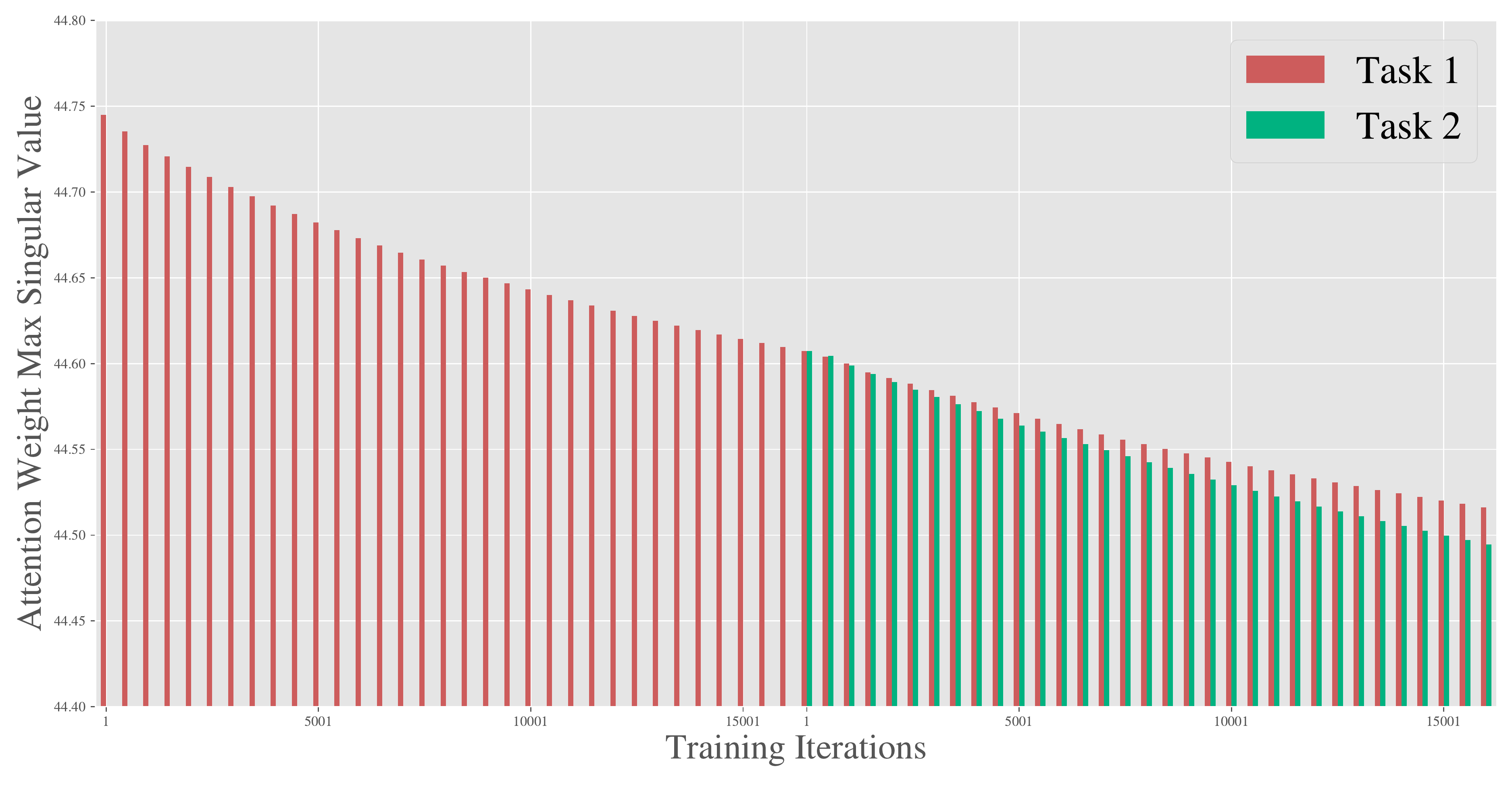}
    \vspace{-0.3cm}
    \caption{\small We show the effect of monolingual and bilingual training on the maximum singular value of attention weights. The red line shows the maximum singular value for a monolingual model trained on English Wikipedia for 32k iterations. The green line shows the maximum singular value if in the $16K$ iteration we switch to bilingual training. As shown, bilingual training leads to faster decay in the maximum singular value.}

    \label{fig:sv_drop}
\end{figure}

\paragraph{Text Classification.}
We conduct another set of experiments to observe the robustness of fine-tuned monolingual and bilingual GPT-2 models for text classification. In this section, we fine-tune both the monolingual and the bilingual GPT-2 models (previously trained) for downstream classification tasks using the GLUE benchmark
\citep{GLUE, sst, stsb, qqp, wnli, rte1, rte2, rte3, rte4, mrpc, cola, squad}
to compare the robustness of models to weight perturbations. The two perturbation methods tested in this section are random weight deletion and random Gaussian noise added to attention weights. For each task, we fine-tune both models for ten epochs. 
When applying random pruning, the accuracy of each model is evaluated after deleting $p$ percent of model weights, $p$ ranging from $0\%$ to $45\%$. When perturbing model weights by adding noise, we try various Gaussian noise distributions with standard deviations ranging from 0 to 0.09. Experiment results can be found in the Appendix section.

\begin{table}
    \centering
\begin{tabular}{c|c|c}
Task & \multicolumn{2}{c}{Fine-tuned using} \\
 & Monolingual ckpt & Bilingual ckpt\\ 
SST2 & 70567.875 & 60663.121 \\
QQP & 70608.195 & 60649.586 \\
MRPC & 70498.953 & 60590.769 \\
RTE & 70508.968 & 60590.765 \\
CoLA & 70519.781 & 60600.933 \\
\end{tabular}
     \caption{\small We compute the sum of the squares of the weights of an attention layer for monolingual and bilingual models. The latter have smaller magnitudes, indicating that multitasking induces weight regularization. }
     \label{table:sum_sqr_weights}
\end{table}

\paragraph{Random Pruning.}
 We compare the classification accuracy between the fine-tuned model from the monolingual pre-trained network and the fine-tuned model using the bilingual network. Each element in attention parameters is pruned with probability $p$, where $p$ ranges from $.0$ to $.45$. We evaluate the classification accuracy for the following GLUE tasks: CoLA, QQP, SST2, MRPC, QNLI, and RTE. 

We expect the accuracy of both models to decay as we prune a more considerable number of parameters. The monolingual model shows a faster decay in almost all tasks. For some tasks such as SST2, QQP, and MRPC, we observe that the bilingual model starts with lower accuracy, and its performance exceeds the monolingual model as we prune $\approx{5\%}$ to $\approx{25\%}$ of parameters. A detailed set of results in Table \ref{table:random_pruning_table} show models' average prediction accuracy on the GLUE benchmark. 

\definecolor{forestgreen}{rgb}{0.0, 0.7, 0.5}

\begin{table*}[t]
    \centering
\small
\begin{tabular}{c|cc|cc|cc|cc|cc}
Pruning Probability & \multicolumn{2}{c}{QQP}  & \multicolumn{2}{c}{SST2} & \multicolumn{2}{c}{COLA} & \multicolumn{2}{c}{MRPC}  & \multicolumn{2}{c}{RTE} \\
 & m. & b. & m. & b. & m. & b. & m. & b. & m. & b. \\
0.00 & 0.876 & 0.843 & 0.908 & 0.862 & 0.437 & 0.218 & 0.828 & 0.774 & 0.646 & 0.595 \\
0.05 & 0.873 & 0.842 & 0.909 & 0.866 & 0.425 & 0.203 & 0.804 & 0.769 & 0.640 & 0.589 \\
0.10 & 0.867 & 0.833 & 0.899 & 0.868 & 0.403 & 0.204 & \color{forestgreen}0.730 & \color{forestgreen}\textbf{0.744} & 0.603 & 0.575 \\
0.15 & 0.848 & 0.819 & 0.871 & 0.866 & 0.366 & 0.185 & 0.619 & \textbf{0.730} & 0.600 & 0.562 \\
0.20 & 0.804 & 0.786 & \color{forestgreen}0.836 & \color{forestgreen}\textbf{0.859} & 0.326 & 0.179 & 0.416 & \textbf{0.663} & 0.561 & 0.553 \\
0.25 & \color{forestgreen}0.711 & \color{forestgreen}\textbf{0.732} & 0.806 & \textbf{0.847} & 0.267 & 0.159 & 0.377 & \textbf{0.653} & 0.543 & 0.546 \\
0.30 & 0.656 & \textbf{0.678} & 0.760 & \textbf{0.828} & 0.216 & 0.137 & 0.320 & \textbf{0.504} & 0.537 & 0.536 \\
0.35 & 0.638 & \textbf{0.674} & 0.714 & \textbf{0.815} & 0.153 & 0.092 & 0.317 & \textbf{0.420} & 0.522 & 0.494 \\
0.40 & 0.632 & \textbf{0.655} & 0.683 & \textbf{0.793} & 0.097 & 0.058 & 0.316 & \textbf{0.328} & 0.521 & 0.488 \\
0.45 & 0.632 & \textbf{0.636} & 0.651 & \textbf{0.773} & 0.060 & 0.042 & 0.316 & \textbf{0.328} & 0.525 & 0.485 \\
\end{tabular}
     \caption{\small Performance under a range of random pruning probabilities for various GLUE tasks. Columns labeled with ``m" determine classification accuracy of  monolingual models and columns labeled as ``b" determine accuracy of bilingual. CoLA is evaluated using Matthew's Correlation and other tasks are evaluated by accuracy.}
     \label{table:random_pruning_table}
\end{table*}

\paragraph{Random Noise.}
We also experiment with adding Gaussian noise to the weights.
We vary the noise standard deviation from .0 to 0.09. We evaluate the classification accuracy for the same tasks.
When no noise is added to model parameters, the monolingual model performs slightly better for tasks like QQP and SST2. As we increase the noise, the accuracies of both models drop with almost identical rates. However, both graphs illustrate a cross-over point after which the bilingual model outperforms the monolingual.
The bilingual model achieves significantly higher accuracy in the MRPC task when the standard deviation is greater than $\approx{0.03}$. For CoLA and RTE, the monolingual
model maintains maintains higher performance regardless of
the noise level. A detailed set of results in the Appendix section shows models' average prediction accuracy on the GLUE benchmark.

\begin{figure*}[!htp]
\begin{center}
\begin{subfigure}
\centering
    \includegraphics[width=0.44\linewidth]{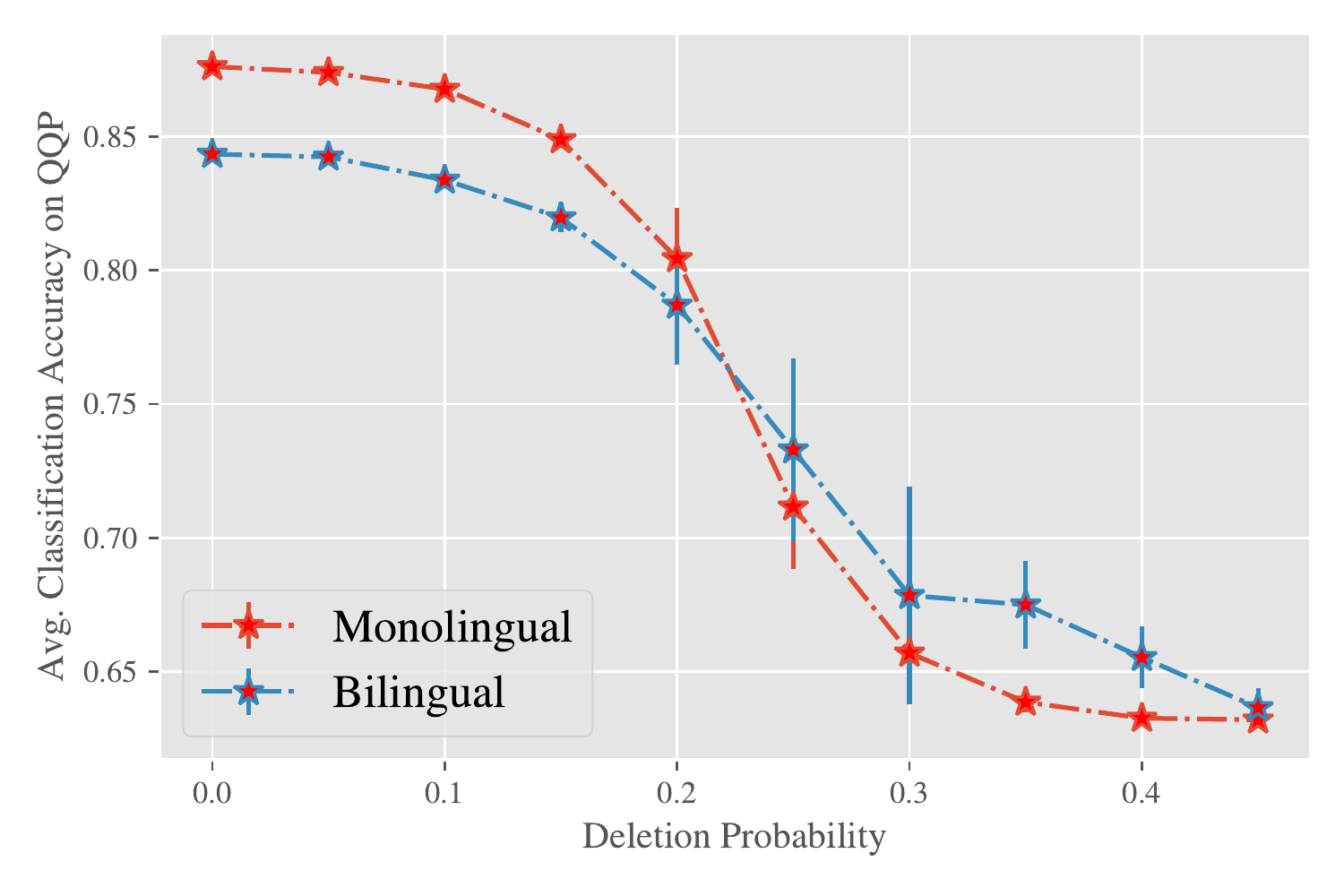}
\end{subfigure}
\begin{subfigure}
\centering
    \includegraphics[width=0.44\linewidth]{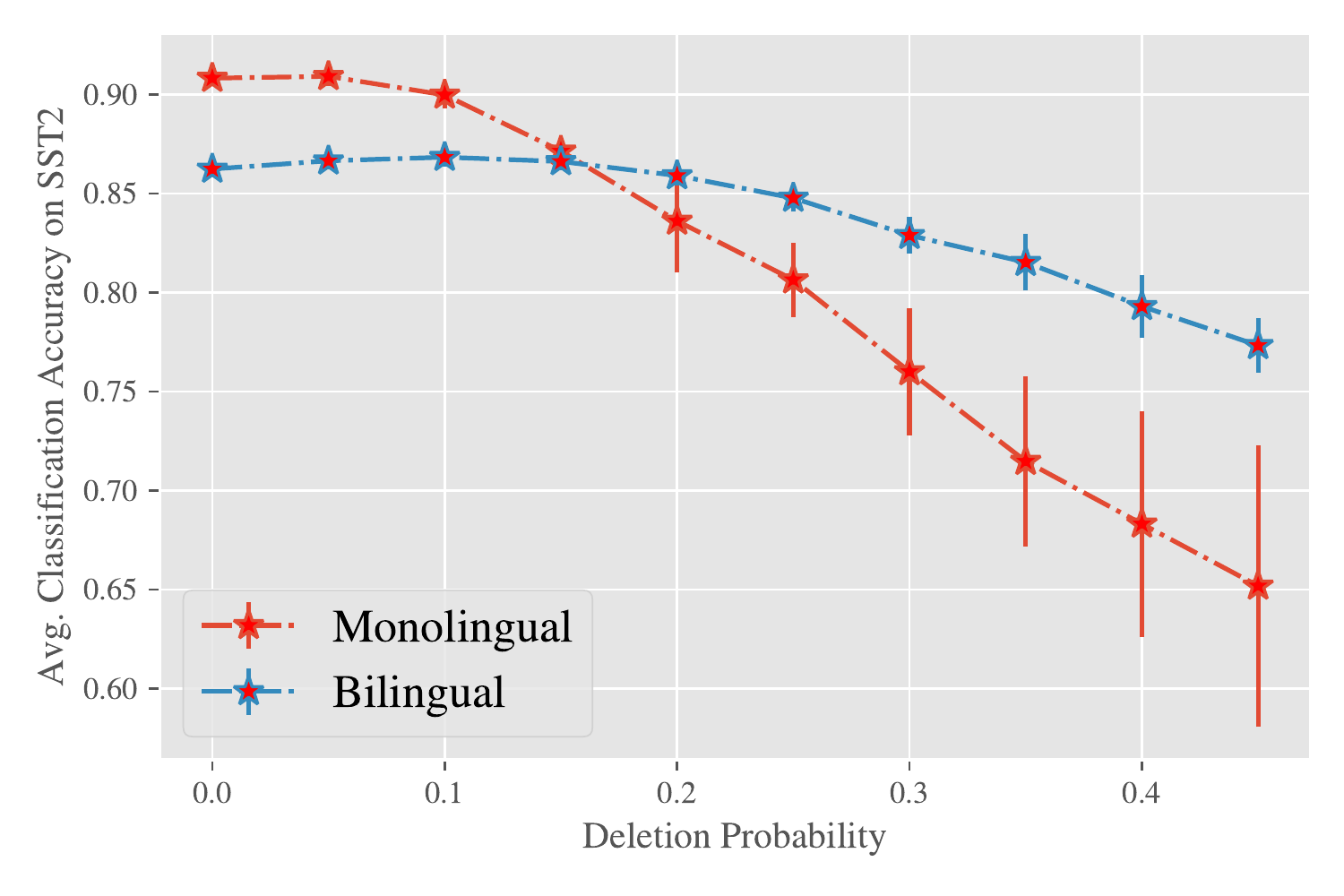}
\end{subfigure}
\begin{subfigure}
\centering
    \includegraphics[width=0.44\linewidth]{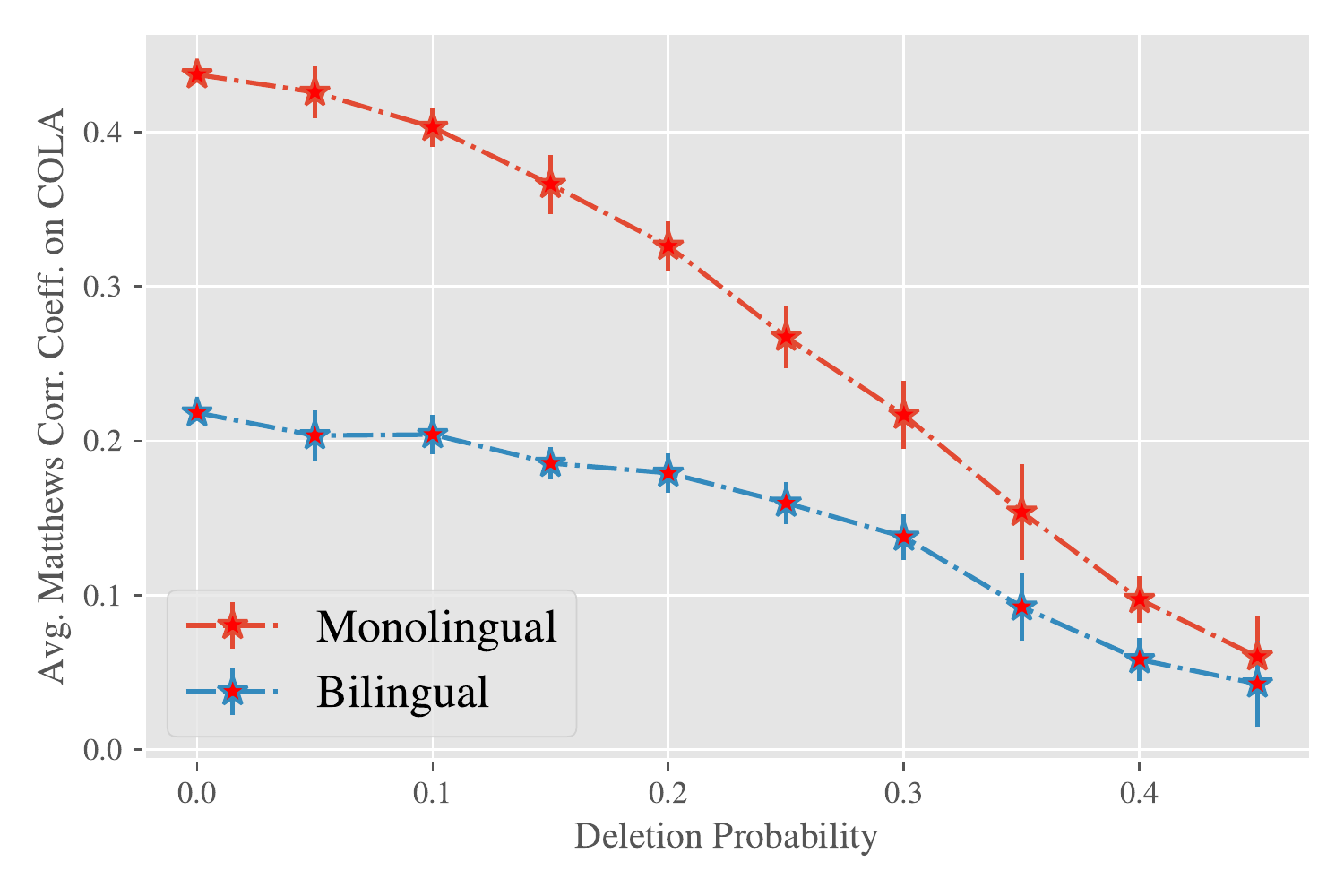}
\end{subfigure}
\begin{subfigure}
\centering
    \includegraphics[width=0.44\linewidth]{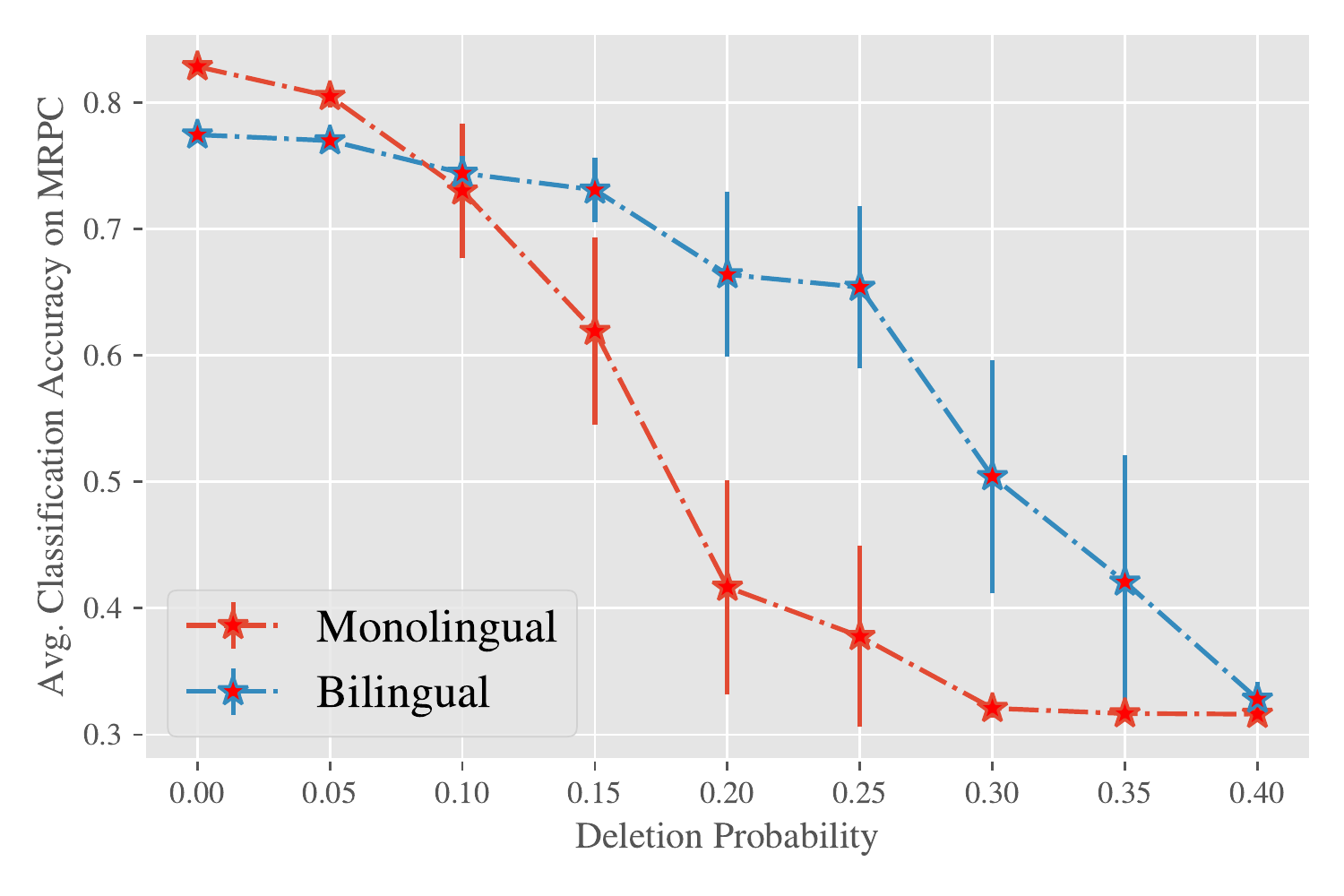}
\end{subfigure}
\vspace{-0.5cm}
\end{center}
\caption{\small Performance comparison in GLUE tasks: QQP, SST2, CoLA, and MRPC under random erasures. QQP: Monolingual drops lower than the bilingual model after $\approx25\%$ of the parameters are deleted. SST2: Monolingual drops with a faster rate, falling behind the bilingual after deleting $\approx15\%$ of the parameters. 
CoLA: Both models reach $\approx 0$ MCC (random prediction) with $\approx 45\%$ of parameters pruned. MRPC: The accuracy of the monolingual degrades at a faster rate as pruning probability increases higher than $\approx 10\%$.
}
\end{figure*}

\section{Related Work}

\noindent \textbf{Cognitive Reserve and Bilingualism.} 
Our work is inspired by Cognitive Science and evidence of Cognitive Reserve in bilinguals. 
One implication of our theory is that multitasking leads to smaller weights on average. This could be related to studies performed in healthy older adults that indicate that despite overall less gray matter volume and poorer white matter integrity (i.e., poorer structural brain connectivity), older healthy bilinguals perform equally well or outperform monolinguals in several cognitive tasks \citep{anderson2021bilingualism,gold2013lifelong}. 

We would like to emphasize that our research is solely on \textit{artificial networks} which have huge differences to biological neurons. No definite extrapolations should be made to Cognitive Neuroscience without further work.
Nonetheless, we show that there is a simple mathematical abstraction that seems to align with the significantly more complex phenomena observed in bilingual cognitive reserve.

\noindent \textbf{Multitask Learning.} The most closely related work is by \citet{multitask_adv} which shows that multitask learning increases \textit{adversarial} robustness. The intuition behind their proof is that, with task diversity, the gradient of the loss with respect to the wrong label is small as orthogonal tasks make gradients that cancel out. \citet{wu2020adversarial} establishes a connection between robustness to weight perturbations and adversarial attacks. Our work is related but different since it directly establishes a connection between structural robustness and multitasking and shows a cross-over in performance across various domains and tasks. Our theoretical analysis is also completely different compared to prior works. More information on multitask learning can be found in \citet{multitask_adv} and \citet{ghamizi2021adversarial}.

Many studies on network compression and the Lottery Ticket Hypothesis are related to our Magnitude Pruning experiments.
\citet{NIPS1989_6c9882bb, han} find that selectively pruned networks can be trained from randomly initialized weights to match the performance of the original network. \citet{frankle2018the} introduces the hypothesis that randomly initialized neural networks contain a very sparse sub-network that, if initialized correctly, can achieve the accuracy of the original model. \citet{chen2021long} studies this in continual learning and examines various pruning methods.

\section{Conclusions}
\label{sec:conclusions}
We demonstrated a connection between multitask learning and robustness to structural failures for artificial neural networks. For linear representation learning we obtained a characterization of robustness through the spectrum of the task matrix. We showed that robustness comes from diverse tasks which imply a bounded spectral norm for $C$. One limitation of our theoretical work is that we did not analyze learning algorithms but directly used the SVD solution. It would be interesting to see if gradient descent introduces further regularization
or other effects, 
especially in the non-linear case. 

Experimentally, we observed increased robustness for both linguistic and non-linguistic tasks. More complex settings like multi-lingual models, cross-language transfer and their interactions remain to be explored. 
Finally, it remains open if bilingualism and cognitive reserve in humans can indeed be connected to our framework. It would be fascinating if neuroimaging techniques can measure any form of anatomical or functional regularization that bilingualism could be creating in humans. 

\section{Acknowledgments} 
This research has been supported by NSF Grants CCF 1763702,
AF 1901292, CNS 2148141, Tripods CCF 1934932, IFML CCF 2019844, the Texas Advanced Computing Center (TACC) and research gifts by Western Digital, WNCG IAP, UT Austin Machine Learning Lab (MLL), Cisco and the Archie Straiton Endowed Faculty Fellowship.

\bibliography{citations.bib}

\begin{thebibliography}{41}
\providecommand{\natexlab}[1]{#1}
\providecommand{\url}[1]{\texttt{#1}}
\expandafter\ifx\csname urlstyle\endcsname\relax
  \providecommand{\doi}[1]{doi: #1}\else
  \providecommand{\doi}{doi: \begingroup \urlstyle{rm}\Url}\fi

\bibitem[Anderson et~al.(2021)Anderson, Grundy, Grady, Craik, and
  Bialystok]{anderson2021bilingualism}
John~AE Anderson, John~G Grundy, Cheryl~L Grady, Fergus~IM Craik, and Ellen
  Bialystok.
\newblock Bilingualism contributes to reserve and working memory efficiency:
  Evidence from structural and functional neuroimaging.
\newblock \emph{Neuropsychologia}, 163:\penalty0 108071, 2021.

\bibitem[Gold et~al.(2013)Gold, Johnson, and Powell]{gold2013lifelong}
Brian~T Gold, Nathan~F Johnson, and David~K Powell.
\newblock Lifelong bilingualism contributes to cognitive reserve against white
  matter integrity declines in aging.
\newblock \emph{Neuropsychologia}, 51\penalty0 (13):\penalty0 2841--2846, 2013.

\bibitem[Bialystok et~al.(2007)Bialystok, Craik, and
  Freedman]{bialystok2007bilingualism}
Ellen Bialystok, Fergus~IM Craik, and Morris Freedman.
\newblock Bilingualism as a protection against the onset of symptoms of
  dementia.
\newblock \emph{Neuropsychologia}, 45\penalty0 (2):\penalty0 459--464, 2007.

\bibitem[Craik et~al.(2010)Craik, Bialystok, and Freedman]{craik2010delaying}
Fergus~IM Craik, Ellen Bialystok, and Morris Freedman.
\newblock Delaying the onset of alzheimer disease: Bilingualism as a form of
  cognitive reserve.
\newblock \emph{Neurology}, 75\penalty0 (19):\penalty0 1726--1729, 2010.

\bibitem[Barulli and Stern(2013)]{barulli2013efficiency}
Daniel Barulli and Yaakov Stern.
\newblock Efficiency, capacity, compensation, maintenance, plasticity: emerging
  concepts in cognitive reserve.
\newblock \emph{Trends in cognitive sciences}, 17\penalty0 (10):\penalty0
  502--509, 2013.

\bibitem[Radford et~al.(2019)Radford, Wu, Child, Luan, Amodei, and
  Sutskever]{radford2019language}
Alec Radford, Jeff Wu, Rewon Child, David Luan, Dario Amodei, and Ilya
  Sutskever.
\newblock Language models are unsupervised multitask learners.
\newblock 2019.

\bibitem[Han et~al.(2015)Han, Pool, Tran, and Dally]{han}
Song Han, Jeff Pool, John Tran, and William~J. Dally.
\newblock Learning both weights and connections for efficient neural networks.
\newblock In \emph{Proceedings of the 28th International Conference on Neural
  Information Processing Systems - Volume 1}, NIPS'15, page 1135–1143,
  Cambridge, MA, USA, 2015. MIT Press.

\bibitem[Maurer et~al.(2016)Maurer, Pontil, and Romera-Paredes]{Maurer}
Andreas Maurer, Massimiliano Pontil, and Bernardino Romera-Paredes.
\newblock The benefit of multitask representation learning.
\newblock \emph{J. Mach. Learn. Res.}, 17\penalty0 (1):\penalty0 2853–2884,
  jan 2016.
\newblock ISSN 1532-4435.

\bibitem[Du et~al.(2021)Du, Hu, Kakade, Lee, and Lei]{du2020few}
Simon~Shaolei Du, Wei Hu, Sham~M. Kakade, Jason~D. Lee, and Qi~Lei.
\newblock Few-shot learning via learning the representation, provably.
\newblock In \emph{International Conference on Learning Representations}, 2021.
\newblock URL \url{https://openreview.net/forum?id=pW2Q2xLwIMD}.

\bibitem[Golub and Van~Loan(1996)]{golub1996matrix}
Gene~H Golub and Charles~F Van~Loan.
\newblock Matrix computations. edition, 1996.

\bibitem[Blum et~al.(2020)Blum, Hopcroft, and Kannan]{blum2020foundations}
Avrim Blum, John Hopcroft, and Ravindran Kannan.
\newblock \emph{Foundations of data science}.
\newblock Cambridge University Press, 2020.

\bibitem[Devlin et~al.(2019)Devlin, Chang, Lee, and Toutanova]{devlin2018bert}
Jacob Devlin, Ming-Wei Chang, Kenton Lee, and Kristina Toutanova.
\newblock Bert: Pre-training of deep bidirectional transformers for language
  understanding.
\newblock In \emph{NAACL}, 2019.

\bibitem[de~Vries and Nissim(2021)]{gpt_recycling}
Wietse de~Vries and Malvina Nissim.
\newblock As good as new. how to successfully recycle english gpt-2 to make
  models for other languages.
\newblock \emph{Findings of the Association for Computational Linguistics:
  ACL-IJCNLP 2021}, 2021.
\newblock \doi{10.18653/v1/2021.findings-acl.74}.
\newblock URL \url{http://dx.doi.org/10.18653/v1/2021.findings-acl.74}.

\bibitem[{Ortiz Su{\'a}rez} et~al.(2019){Ortiz Su{\'a}rez}, Sagot, and
  Romary]{OrtizSuarezSagotRomary2019}
Pedro~Javier {Ortiz Su{\'a}rez}, Beno{\^i}t Sagot, and Laurent Romary.
\newblock Asynchronous pipelines for processing huge corpora on medium to low
  resource infrastructures.
\newblock Proceedings of the Workshop on Challenges in the Management of Large
  Corpora (CMLC-7) 2019. Cardiff, 22nd July 2019, pages 9 -- 16, Mannheim,
  2019. Leibniz-Institut f{\"u}r Deutsche Sprache.
\newblock \doi{10.14618/ids-pub-9021}.
\newblock URL \url{http://nbn-resolving.de/urn:nbn:de:bsz:mh39-90215}.

\bibitem[Maas et~al.(2011)Maas, Daly, Pham, Huang, Ng, and
  Potts]{maas-etal-2011-learning}
Andrew~L. Maas, Raymond~E. Daly, Peter~T. Pham, Dan Huang, Andrew~Y. Ng, and
  Christopher Potts.
\newblock Learning word vectors for sentiment analysis.
\newblock In \emph{Proceedings of the 49th Annual Meeting of the Association
  for Computational Linguistics: Human Language Technologies}, pages 142--150,
  Portland, Oregon, USA, June 2011. Association for Computational Linguistics.
\newblock URL \url{https://aclanthology.org/P11-1015}.

\bibitem[Paperno et~al.(2016)Paperno, Kruszewski, Lazaridou, Pham, Bernardi,
  Pezzelle, Baroni, Boleda, and Fernández]{paperno2016lambada}
Denis Paperno, Germán Kruszewski, Angeliki Lazaridou, Quan~Ngoc Pham,
  Raffaella Bernardi, Sandro Pezzelle, Marco Baroni, Gemma Boleda, and Raquel
  Fernández.
\newblock The lambada dataset: Word prediction requiring a broad discourse
  context, 2016.

\bibitem[Wang et~al.(2018)Wang, Singh, Michael, Hill, Levy, and Bowman]{GLUE}
Alex Wang, Amanpreet Singh, Julian Michael, Felix Hill, Omer Levy, and Samuel
  Bowman.
\newblock Glue: A multi-task benchmark and analysis platform for natural
  language understanding.
\newblock \emph{Proceedings of the 2018 EMNLP Workshop BlackboxNLP: Analyzing
  and Interpreting Neural Networks for NLP}, 2018.
\newblock \doi{10.18653/v1/w18-5446}.
\newblock URL \url{http://dx.doi.org/10.18653/v1/w18-5446}.

\bibitem[Socher et~al.(2013)Socher, Perelygin, Wu, Chuang, Manning, Ng, and
  Potts]{sst}
Richard Socher, Alex Perelygin, Jean Wu, Jason Chuang, Christopher~D Manning,
  Andrew Ng, and Christopher Potts.
\newblock Recursive deep models for semantic compositionality over a sentiment
  treebank.
\newblock In \emph{Proceedings of the 2013 conference on empirical methods in
  natural language processing}, pages 1631--1642, 2013.

\bibitem[Cer et~al.(2017)Cer, Diab, Agirre, Lopez-Gazpio, and Specia]{stsb}
Daniel Cer, Mona Diab, Eneko Agirre, Inigo Lopez-Gazpio, and Lucia Specia.
\newblock Semeval-2017 task 1: Semantic textual similarity multilingual and
  crosslingual focused evaluation.
\newblock \emph{Proceedings of the 11th International Workshop on Semantic
  Evaluation (SemEval-2017)}, 2017.
\newblock \doi{10.18653/v1/s17-2001}.
\newblock URL \url{http://dx.doi.org/10.18653/v1/S17-2001}.

\bibitem[Iyer et~al.(2017)Iyer, Dandekar, and Csernai]{qqp}
Shankar Iyer, Nikhil Dandekar, and Kornel Csernai.
\newblock First quora dataset release: Question pairs, 2017.
\newblock URL
  \url{https://data.quora.com/First-Quora-Dataset-Release-Question-Pairs}.

\bibitem[Levesque et~al.(2012)Levesque, Davis, and Morgenstern]{wnli}
Hector Levesque, Ernest Davis, and Leora Morgenstern.
\newblock The winograd schema challenge.
\newblock In \emph{Thirteenth International Conference on the Principles of
  Knowledge Representation and Reasoning}, 2012.

\bibitem[Dagan et~al.(2005)Dagan, Glickman, and Magnini]{rte1}
Ido Dagan, Oren Glickman, and Bernardo Magnini.
\newblock The pascal recognising textual entailment challenge.
\newblock In \emph{Machine Learning Challenges Workshop}, pages 177--190.
  Springer, 2005.

\bibitem[Bar-Haim et~al.(2006)Bar-Haim, Dagan, Dolan, Ferro, Giampiccolo,
  Magnini, and Szpektor]{rte2}
Roy Bar-Haim, Ido Dagan, Bill Dolan, Lisa Ferro, Danilo Giampiccolo, Bernardo
  Magnini, and Idan Szpektor.
\newblock The second pascal recognising textual entailment challenge.
\newblock In \emph{Proceedings of the second PASCAL challenges workshop on
  recognising textual entailment}, volume~6, pages 6--4. Venice, 2006.

\bibitem[Giampiccolo et~al.(2007)Giampiccolo, Magnini, Dagan, and Dolan]{rte3}
Danilo Giampiccolo, Bernardo Magnini, Ido Dagan, and Bill Dolan.
\newblock The third pascal recognizing textual entailment challenge.
\newblock In \emph{Proceedings of the ACL-PASCAL workshop on textual entailment
  and paraphrasing}, pages 1--9. Association for Computational Linguistics,
  2007.

\bibitem[Bentivogli et~al.(2009)Bentivogli, Clark, Dagan, and
  Giampiccolo]{rte4}
Luisa Bentivogli, Peter Clark, Ido Dagan, and Danilo Giampiccolo.
\newblock The fifth pascal recognizing textual entailment challenge.
\newblock In \emph{TAC}, 2009.

\bibitem[Dolan and Brockett(2005)]{mrpc}
William~B Dolan and Chris Brockett.
\newblock Automatically constructing a corpus of sentential paraphrases.
\newblock In \emph{Proceedings of the Third International Workshop on
  Paraphrasing (IWP2005)}, 2005.

\bibitem[Warstadt et~al.(2019)Warstadt, Singh, and Bowman]{cola}
Alex Warstadt, Amanpreet Singh, and Samuel~R Bowman.
\newblock Neural network acceptability judgments.
\newblock \emph{Transactions of the Association for Computational Linguistics},
  7:\penalty0 625--641, 2019.

\bibitem[Rajpurkar et~al.(2016)Rajpurkar, Zhang, Lopyrev, and Liang]{squad}
Pranav Rajpurkar, Jian Zhang, Konstantin Lopyrev, and Percy Liang.
\newblock Squad: 100,000+ questions for machine comprehension of text.
\newblock \emph{Proceedings of the 2016 Conference on Empirical Methods in
  Natural Language Processing}, 2016.
\newblock \doi{10.18653/v1/d16-1264}.
\newblock URL \url{http://dx.doi.org/10.18653/v1/D16-1264}.

\bibitem[Mao et~al.(2020)Mao, Gupta, Nitin, Ray, Song, Yang, and
  Vondrick]{multitask_adv}
Chengzhi Mao, Amogh Gupta, Vikram Nitin, Baishakhi Ray, Shuran Song, Junfeng
  Yang, and Carl Vondrick.
\newblock Multitask learning strengthens adversarial robustness.
\newblock \emph{Lecture Notes in Computer Science}, page 158–174, 2020.
\newblock ISSN 1611-3349.
\newblock \doi{10.1007/978-3-030-58536-5_10}.
\newblock URL \url{http://dx.doi.org/10.1007/978-3-030-58536-5_10}.

\bibitem[Wu et~al.(2020)Wu, Xia, and Wang]{wu2020adversarial}
Dongxian Wu, Shu-Tao Xia, and Yisen Wang.
\newblock Adversarial weight perturbation helps robust generalization.
\newblock \emph{Advances in Neural Information Processing Systems},
  33:\penalty0 2958--2969, 2020.

\bibitem[Ghamizi et~al.(2021)Ghamizi, Cordy, Papadakis, and
  Traon]{ghamizi2021adversarial}
Salah Ghamizi, Maxime Cordy, Mike Papadakis, and Yves~Le Traon.
\newblock Adversarial robustness in multi-task learning: Promises and
  illusions, 2021.

\bibitem[LeCun et~al.(1990)LeCun, Denker, and Solla]{NIPS1989_6c9882bb}
Yann LeCun, John Denker, and Sara Solla.
\newblock Optimal brain damage.
\newblock In D.~Touretzky, editor, \emph{Advances in Neural Information
  Processing Systems}, volume~2. Morgan-Kaufmann, 1990.

\bibitem[Frankle and Carbin(2019)]{frankle2018the}
Jonathan Frankle and Michael Carbin.
\newblock The lottery ticket hypothesis: Finding sparse, trainable neural
  networks.
\newblock In \emph{International Conference on Learning Representations}, 2019.

\bibitem[Chen et~al.(2021)Chen, Zhang, Liu, Chang, and Wang]{chen2021long}
Tianlong Chen, Zhenyu Zhang, Sijia Liu, Shiyu Chang, and Zhangyang Wang.
\newblock Long live the lottery: The existence of winning tickets in lifelong
  learning.
\newblock In \emph{International Conference on Learning Representations}, 2021.

\bibitem[Ledoux(2001)]{ledoux2001concentration}
Michel Ledoux.
\newblock \emph{The concentration of measure phenomenon}.
\newblock Number~89. American Mathematical Soc., 2001.

\bibitem[Hamborg et~al.(2017)Hamborg, Meuschke, Breitinger, and Gipp]{cc_news}
Felix Hamborg, Norman Meuschke, Corinna Breitinger, and Bela Gipp.
\newblock news-please: A generic news crawler and extractor.
\newblock In \emph{Proceedings of the 15th International Symposium of
  Information Science}, pages 218--223, March 2017.
\newblock \doi{10.5281/zenodo.4120316}.

\bibitem[Bender et~al.(2021)Bender, Gebru, McMillan-Major, and
  Shmitchell]{bender2021dangers}
Emily~M Bender, Timnit Gebru, Angelina McMillan-Major, and Shmargaret
  Shmitchell.
\newblock On the dangers of stochastic parrots: Can language models be too big?
\newblock In \emph{Proceedings of the 2021 ACM Conference on Fairness,
  Accountability, and Transparency}, pages 610--623, 2021.

\bibitem[Brown et~al.(2020)Brown, Mann, Ryder, Subbiah, Kaplan, Dhariwal,
  Neelakantan, Shyam, Sastry, Askell, et~al.]{brown2020language}
Tom Brown, Benjamin Mann, Nick Ryder, Melanie Subbiah, Jared~D Kaplan, Prafulla
  Dhariwal, Arvind Neelakantan, Pranav Shyam, Girish Sastry, Amanda Askell,
  et~al.
\newblock Language models are few-shot learners.
\newblock \emph{Advances in neural information processing systems},
  33:\penalty0 1877--1901, 2020.

\bibitem[Paszke et~al.(2019)Paszke, Gross, Massa, Lerer, Bradbury, Chanan,
  Killeen, Lin, Gimelshein, Antiga, Desmaison, Kopf, Yang, DeVito, Raison,
  Tejani, Chilamkurthy, Steiner, Fang, Bai, and Chintala]{pytorch}
Adam Paszke, Sam Gross, Francisco Massa, Adam Lerer, James Bradbury, Gregory
  Chanan, Trevor Killeen, Zeming Lin, Natalia Gimelshein, Luca Antiga, Alban
  Desmaison, Andreas Kopf, Edward Yang, Zachary DeVito, Martin Raison, Alykhan
  Tejani, Sasank Chilamkurthy, Benoit Steiner, Lu~Fang, Junjie Bai, and Soumith
  Chintala.
\newblock Pytorch: An imperative style, high-performance deep learning library.
\newblock In H.~Wallach, H.~Larochelle, A.~Beygelzimer, F.~d\textquotesingle
  Alch\'{e}-Buc, E.~Fox, and R.~Garnett, editors, \emph{Advances in Neural
  Information Processing Systems 32}, pages 8024--8035. Curran Associates,
  Inc., 2019.
\newblock URL
  \url{http://papers.neurips.cc/paper/9015-pytorch-an-imperative-style-high-performance-deep-learning-library.pdf}.

\bibitem[Wolf et~al.(2019)Wolf, Debut, Sanh, Chaumond, Delangue, Moi, Cistac,
  Rault, Louf, Funtowicz, Davison, Shleifer, von Platen, Ma, Jernite, Plu, Xu,
  Scao, Gugger, Drame, Lhoest, and Rush]{wolf2019huggingfaces}
Thomas Wolf, Lysandre Debut, Victor Sanh, Julien Chaumond, Clement Delangue,
  Anthony Moi, Pierric Cistac, Tim Rault, Rémi Louf, Morgan Funtowicz, Joe
  Davison, Sam Shleifer, Patrick von Platen, Clara Ma, Yacine Jernite, Julien
  Plu, Canwen Xu, Teven~Le Scao, Sylvain Gugger, Mariama Drame, Quentin Lhoest,
  and Alexander~M. Rush.
\newblock Huggingface's transformers: State-of-the-art natural language
  processing, 2019.

\bibitem[Bradbury et~al.(2018)Bradbury, Frostig, Hawkins, Johnson, Leary,
  Maclaurin, Necula, Paszke, Vander{P}las, Wanderman-{M}ilne, and
  Zhang]{jax2018github}
James Bradbury, Roy Frostig, Peter Hawkins, Matthew~James Johnson, Chris Leary,
  Dougal Maclaurin, George Necula, Adam Paszke, Jake Vander{P}las, Skye
  Wanderman-{M}ilne, and Qiao Zhang.
\newblock {JAX}: composable transformations of {P}ython+{N}um{P}y programs,
  2018.
\newblock URL \url{http://github.com/google/jax}.

\end{thebibliography}

\appendix

\newpage
\appendix
\onecolumn

\section{Proofs}
\newtheoremstyle{named}{}{}{\itshape}{}{\bfseries}{.}{.5em}{\thmnote{Theorem }#3}
\theoremstyle{named}
\newtheorem*{namedtheorem}{theorem}
\begin{namedtheorem}[\ref{th:additive_noise}]
    Let $C \in \R^{d\times T}$ be a matrix with distinct singular values $\sigma_1 > \sigma_2 > ... > \sigma_T$. Let $W, \Gamma$ be the SVD solution of \eqref{eq:opt_problem}. Under the Additive noise model defined in \ref{eq:additive_noise_model},
    \begin{gather}
        \mse{T}{\sigma^2} = \mse{T}{0} + \frac{\sum_{i=1}^{k}\sigma_i^2(C)}{T} \sigma^2\enspace.
    \end{gather}
\end{namedtheorem}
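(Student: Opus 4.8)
The plan is to prove the identity by a direct bias--variance computation of the per-task error, followed by averaging. Fix a task $i$ and recall that under the corruption model the prediction is $\gamma_i^T f_{\theta_c}(x) = \gamma_i^T (W+N)x$, while the true label is $y = c_i^T x$. Writing $\hat c_i = W^T\gamma_i$ for the $i$-th approximated task vector, the prediction error splits into a deterministic (bias) part and a pure-noise part:
\[
\gamma_i^T(W+N)x - c_i^T x = (\hat c_i - c_i)^T x + \gamma_i^T N x .
\]
First I would expand the squared error and take the expectation over both the input $x$ and the noise matrix $N$, which are independent.

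The key simplification is that the cross term vanishes. Since $N$ has i.i.d.\ zero-mean entries and is independent of $x$, conditioning on $x$ gives $\E_N[\gamma_i^T N x \mid x] = \gamma_i^T\,\E_N[N]\,x = 0$, so $\E_{x,N}\big[(\hat c_i - c_i)^T x \cdot \gamma_i^T N x\big]=0$. What remains is the noiseless term $\E_x\big[((\hat c_i - c_i)^T x)^2\big]$, which is unaffected by $N$ and equals $\msei{i}{0}$, plus the pure-noise term $\E_{x,N}\big[(\gamma_i^T N x)^2\big]$.

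For the noise term I would evaluate the variance of the bilinear form directly. Expanding $\gamma_i^T N x = \sum_{j,l}(\gamma_i)_j N_{jl} x_l$ and using $\E[N_{jl}N_{j'l'}] = \sigma^2\delta_{jj'}\delta_{ll'}$, taking $\E_N$ first yields
\[
\E_{x,N}\big[(\gamma_i^T N x)^2\big] = \sigma^2\,\|\gamma_i\|^2\,\E_x[\|x\|^2] = \sigma^2\,\|\gamma_i\|^2 ,
\]
under the paper's input normalization so that $\E_x[\|x\|^2]=1$. Hence $\msei{i}{\sigma^2} = \msei{i}{0} + \sigma^2\|\gamma_i\|^2$, and averaging over the $T$ tasks gives $\mse{T}{\sigma^2} = \mse{T}{0} + \frac{\sigma^2}{T}\sum_{i=1}^{T}\|\gamma_i\|^2$.

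Finally I would invoke the Frobenius-norm identity already established in the excerpt: for the SVD solution, $\sum_{i=1}^{T}\|\gamma_i\|^2 = \|\Gamma\|_F^2 = \|\Sigma_k V^T\|_F^2 = \sum_{i=1}^{k}\sigma_i(C)^2$, since each row of $\Sigma_k V^T$ is a unit row of $V^T$ rescaled by $\sigma_i$. Substituting closes the proof. I do not expect a deep obstacle here; the main thing to handle carefully is the bookkeeping over the two independent sources of randomness --- verifying that the cross term vanishes and correctly computing the quadratic form $\E[(\gamma_i^T N x)^2]$ --- and fixing the input normalization $\E_x[\|x\|^2]=1$, after which the slope $\sum_{i=1}^{k}\sigma_i(C)^2 / T$ emerges directly.
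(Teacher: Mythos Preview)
Your proposal is correct and follows essentially the same bias--variance decomposition as the paper: expand the squared error, kill the cross term via $\E_N[N]=0$, compute the noise quadratic form to get $\sigma^2\|\gamma_i\|^2$, and then identify $\sum_i\|\gamma_i\|^2=\|\Gamma\|_F^2=\sum_{i=1}^k\sigma_i(C)^2$ for the SVD solution. The only minor discrepancy is that the paper's $\mathrm{MSE}^i$ is defined for a \emph{fixed} unit-norm $x$ with expectation taken only over the corruption (so $\|x\|^2=1$ deterministically), whereas you additionally average over $x$ under the assumption $\E_x[\|x\|^2]=1$; this does not affect the computation or the conclusion.
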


\begin{proof}
\begin{gather}
   \msei{T}{\sigma^2} = \E\left[ \left(c_i^Tx - \gamma_i^TW_cx\right)^2\right] = x^Tc_ic_i^Tx - 2c_i^Tx\gamma_i^T\E[W_c]x + x^T\E[W_c^T\gamma_i\gamma_i^TW_c]x \\ 
   = x^Tc_ic_i^Tx - 2c_i^Tx\gamma_i^TWx + x^T\E[W_c^T\gamma_i\gamma_i^TW_c]x \\
   = x^Tc_ic_i^Tx - 2c_i^Tx\gamma_i^TWx + x^T\E[(W^T + N^T)\gamma_i\gamma_i^T(W + N)]x \\ 
  = x^Tc_ic_i^Tx - 2c_i^Tx\gamma_i^TWx + x^TW^T\gamma_i\gamma_i^TWx + x^T\E[N^T\gamma_i\gamma_i^TN]x\enspace.
\end{gather}
Now, observe that:
\begin{gather}
    \sum_{i=1}^{T}\msei{T}{\sigma^2} = \left(\sum_{i=1}^{T} x^Tc_ic_i^Tx - 2c_i^Tx\gamma_i^TWx + x^TW^T\gamma_i\gamma_i^TWx\right) + x^T\E[N^T \gamma_i\gamma_i^T N]x \\
    \mse{T}{\sigma^2} = \mse{T}{0} + \frac{x^T\E\left[N^T\left(\sum_{i=1}^{T}\gamma_i\gamma_i^T\right)N\right]x}{T}\enspace.
\end{gather}

Observe that
\begin{gather}
    \E\left[N^T \left(\sum_{i=1}^{T} \gamma_i\gamma_i^T\right) N\right] = \sigma^2 \tr\left(\sum_{i=1}^{T}\gamma_i\gamma_i^T \right)I_d\enspace.
\end{gather}

For any unit-norm $x$:
\begin{gather}
    x^T\E\left[N^T \left(\sum_{i=1}^{T} \gamma_i\gamma_i^T\right) N\right]x = \sigma^2\tr\left(\sum_{i=1}^{T}\gamma_i\gamma_i^T \right)\enspace.
\end{gather}

Now for the SVD solution, we know that $\Gamma = \Sigma_k V^T$ and $\{\gamma_i\}$ are the columns of $\Gamma$. Hence, 
\begin{gather}
    \sum_{i=1}^{T}\gamma_i\gamma_i^T = \Sigma_k^2 V^TV = \Sigma_k^2.
\end{gather}

Then,

\begin{gather}
    \mse{T}{\sigma^2} = \mse{T}{0} + \sigma^2\frac{\sum_{i=1}^{k}\sigma_i(C)^2}{T}\enspace.
\end{gather}
\end{proof}

We are going to use the following result due to \citet{ledoux2001concentration}.
\begin{lemma}[\citet{ledoux2001concentration}]
Let $C_T: \R^{d\times T}$ be a random matrix whose entries are i.i.d. Gaussian with variance $1/d$. Let $C_{K}$ be the random matrix that submatrix of $C$ that consists of the first $K$ columns of $C_T$. Then,
\begin{gather}
    \Pr\left[\sigma_{\max}(C_K) \geq 1 + \sqrt{K/d} + o(1) + \alpha\right] \leq \exp(-d\alpha^2/2)
\end{gather}
and
\begin{gather}
    \Pr\left[\sigma_{\min}(C_K) \geq 1 - \sqrt{K/d} + o(1) - \alpha\right] \leq \exp(-d\alpha^2/2)\enspace,
\end{gather}
where $o(1)$ is a small-term that tends to $0$ as $d\to \infty$.
\label{lemma:gaussian_spectral_concentration}
\end{lemma}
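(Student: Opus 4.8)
The plan is to recognize Lemma~\ref{lemma:gaussian_spectral_concentration} as the combination of two classical ingredients: a one-sided bound on the \emph{expected} extreme singular values of a Gaussian matrix, and \emph{Gaussian concentration} of Lipschitz functions around their mean. First I would rescale, writing $C_K = \frac{1}{\sqrt d} G$ where $G \in \R^{d\times K}$ has i.i.d.\ standard Gaussian entries, so that $\sigma_{\max}(C_K) = \frac{1}{\sqrt d}\sigma_{\max}(G)$ and likewise for $\sigma_{\min}$. The task then cleanly splits into controlling the mean of these singular values and bounding their fluctuations.

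For the fluctuations, the key observation is that $A \mapsto \sigma_{\max}(A)$ and $A \mapsto \sigma_{\min}(A)$ are $1$-Lipschitz with respect to the Frobenius norm: Weyl's perturbation inequality gives $|\sigma_i(A) - \sigma_i(B)| \le \|A - B\|_{\mathrm{op}} \le \|A - B\|_F$ for every index $i$. Identifying a matrix with $N(0,1)$ entries with a standard Gaussian vector of dimension $dK$ (Frobenius norm $=$ Euclidean norm), the map $G \mapsto \frac{1}{\sqrt d}\sigma_{\max}(G)$ is $\frac{1}{\sqrt d}$-Lipschitz. I would then invoke the Gaussian concentration inequality (Borell--Tsirelson--Ibragimov--Sudakov), which states that for an $L$-Lipschitz $f$ and standard Gaussian $g$ one has $\Pr[f(g) \ge \E f(g) + t] \le \exp(-t^2/2L^2)$. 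With $L = 1/\sqrt d$ and $t = \alpha$ this yields exactly the claimed tail $\exp(-d\alpha^2/2)$ for $\sigma_{\max}(C_K)$ above its mean; applying the same inequality to the $1$-Lipschitz map $A \mapsto -\sigma_{\min}(A)$ gives the matching lower-tail bound $\Pr[\sigma_{\min}(C_K) \le \E[\sigma_{\min}(C_K)] - \alpha] \le \exp(-d\alpha^2/2)$.

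It remains to pin the means down at $1 \pm \sqrt{K/d}$. Here I would use Gordon's Gaussian min--max comparison inequality, which supplies the non-asymptotic bounds $\E[\sigma_{\max}(G)] \le \sqrt d + \sqrt K$ and $\E[\sigma_{\min}(G)] \ge \sqrt d - \sqrt K$ for $K \le d$; dividing by $\sqrt d$ gives $\E[\sigma_{\max}(C_K)] \le 1 + \sqrt{K/d}$ and $\E[\sigma_{\min}(C_K)] \ge 1 - \sqrt{K/d}$. The stated $o(1)$ slack is a convenient cushion, letting one instead substitute the Bai--Yin / Marchenko--Pastur edge $1 \pm \sqrt{K/d}$ if one prefers limiting spectral arguments. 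Substituting these mean bounds into the concentration inequalities from the previous step --- replacing $\E[\sigma_{\max}(C_K)]$ by its upper bound raises the threshold and replacing $\E[\sigma_{\min}(C_K)]$ by its lower bound lowers it, in each case only shrinking the event probability --- produces the two displayed inequalities, with the understanding that the meaningful direction for $\sigma_{\min}$ is the lower tail $\sigma_{\min}(C_K) \le 1 - \sqrt{K/d} - o(1) - \alpha$.

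The main obstacle is the expectation step rather than the concentration step: the concentration is a black-box consequence of the Lipschitz property, whereas locating the means precisely at the edges $1 \pm \sqrt{K/d}$ is where the spectral geometry of Gaussian matrices enters. Gordon's inequality handles this cleanly, but one must be careful that it supplies a one-sided bound in the correct direction in each case --- an upper bound for $\E[\sigma_{\max}]$ and a lower bound for $\E[\sigma_{\min}]$ --- which is exactly what is needed so that combining with the respective tail preserves the sign; the $o(1)$ term absorbs any lower-order discrepancy between the finite-$d$ expectation and the asymptotic edge.
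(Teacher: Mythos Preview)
Your proposal is correct and follows the standard argument: Lipschitzness of extreme singular values combined with Gaussian concentration for the tails, and Gordon's comparison inequalities for the location of the means. You also correctly flag that the second display in the statement should concern the \emph{lower} tail of $\sigma_{\min}$ (i.e., $\le$ rather than $\ge$), which is indeed how the lemma is subsequently applied in the paper.

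Note, however, that the paper does not supply its own proof of this lemma at all --- it is quoted directly from \citet{ledoux2001concentration} and used as a black box in the proof of Theorem~\ref{theorem:additive_noise_random}. So there is no paper proof to compare against; your outline simply reconstructs the classical argument underlying the cited result.
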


\begin{namedtheorem}[\ref{theorem:additive_noise_random}]
Let $C \in \R^{d\times T}$ be a random matrix with Gaussian, i.i.d. entries of variance $1/d$ and $d = \Omega(T^3)$. Let $C_t, C_{t+1}$ be the matrices that are formed by selecting the first $t, (t+1)$ columns of $C$ respectively. Then, there is a noise level $\sigma_{\thres}$ such that with probability $\geq 1 - \exp\left(-\Omega\left(\sqrt{d}\right)\right)$, the SVD solutions (see \eqref{eq:svd_solution}) of \eqref{eq:opt_problem} (for $C_t, C_{t+1}$ respectively), under the noise corruption model, satisfy:
\begin{gather}
    \mse{t+1}{\sigma^2} < \mse{t}{\sigma^2}\enspace,
\end{gather}
$\forall \sigma \geq \sigma_{\thres}$.
\end{namedtheorem}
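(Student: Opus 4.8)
The plan is to collapse both noisy errors into affine functions of $\sigma^2$ and then prove that adding a task strictly lowers the \emph{slope}. By Corollary~\ref{cor:additive_noise} (equivalently Theorem~\ref{th:additive_noise}), applied to the SVD solutions of $C_t$ and of $C_{t+1}$,
\begin{gather*}
\mse{t}{\sigma^2} = \mse{t}{0} + R(t)\,\sigma^2, \qquad \mse{t+1}{\sigma^2} = \mse{t+1}{0} + R(t+1)\,\sigma^2,
\end{gather*}
where $R(K)=\tfrac{1}{K}\sum_{i=1}^{k}\sigma_i(C_K)^2$. Subtracting, the event $\mse{t+1}{\sigma^2}<\mse{t}{\sigma^2}$ is equivalent to $(R(t)-R(t+1))\,\sigma^2 > \mse{t+1}{0}-\mse{t}{0}$. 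The right-hand side is nonnegative in this regime (the averaged rank-$k$ tail energy grows with an added task), which makes the crossover genuine; but in any case, the instant I establish the strict slope drop $R(t+1)<R(t)$ the inequality holds for every $\sigma \ge \sigma_{\thres}$ with
\begin{gather*}
\sigma_{\thres}^2 = \frac{\mse{t+1}{0}-\mse{t}{0}}{R(t)-R(t+1)},
\end{gather*}
which is exactly the assertion. So the whole theorem reduces to proving $R(t+1)<R(t)$ with the stated probability.

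To reach that, I would invoke the Gaussian spectral-concentration bound of Lemma~\ref{lemma:gaussian_spectral_concentration} for both $C_t$ and $C_{t+1}$. Since $\sigma_{\min}$ and $\sigma_{\max}$ sandwich \emph{all} of the singular values, a union bound over the four edge events places, with high probability, every $\sigma_i(C_K)$ inside the band $[\,1-\beta_K,\,1+\beta_K\,]$ with $\beta_K=\sqrt{K/d}+\alpha$. This pins the numerator of each slope via $k(1-\beta_K)^2 \le \sum_{i=1}^{k}\sigma_i(C_K)^2 \le k(1+\beta_K)^2$, so $R(K)$ lies in an $O(\beta_K)$ window around $k/K$. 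Because $K\mapsto k/K$ is strictly decreasing with deterministic gap $\tfrac{k}{t}-\tfrac{k}{t+1}=\Theta(k/t^2)$, the leading behaviour already yields $R(t+1)<R(t)$; Cauchy interlacing — the squared singular values of $C_{t+1}$ are caged between consecutive squared singular values of $C_t$, since $C_t^TC_t$ is a principal submatrix of $C_{t+1}^TC_{t+1}$ — supplies the same increment bound in an intrinsic submatrix form and is what lets me compare the two different matrices, as the remark after the additive-noise theorem anticipates.

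The hard part is purely quantitative: the slope gap is only $\Theta(k/t^2)$, so I must force both pieces of the fluctuation below a fraction of it, uniformly over all $t\le T$, not merely in expectation. The deterministic edge term contributes $O(k\sqrt{T/d}/t)$ to each slope, and requiring this to stay under $\tfrac12\cdot\Theta(k/t^2)$ in the worst case $t\asymp T$ is precisely what forces $\sqrt{T/d}=O(1/T)$, i.e.\ the hypothesis $d=\Omega(T^3)$ (and the relaxation $d=\Omega(t^3)$ if one only wants robustness for the first $t$ added tasks). The stochastic term contributes $O(\sqrt{k}\,\alpha/t)$, so the confidence radius $\alpha$ can be taken only as large as the same gap permits; taking it at that ceiling and union-bounding the four edge events produces the high-probability guarantee in the statement. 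Verifying that the deterministic and stochastic fluctuations simultaneously fit under the shrinking $\Theta(k/t^2)$ gap for every $t$ is the main obstacle; the affine reduction of the first paragraph and the spectral sandwiching of the second are comparatively routine bookkeeping.
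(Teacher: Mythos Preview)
Your proposal is correct and follows essentially the paper's route: invoke Theorem~\ref{th:additive_noise} to collapse the comparison to the strict slope inequality $R(t+1)<R(t)$, then combine the Gaussian spectral-edge concentration of Lemma~\ref{lemma:gaussian_spectral_concentration} with Cauchy interlacing (the paper's Lemma~\ref{lemma:interlacing}) to push the singular-value fluctuations below the deterministic $\Theta(k/t^2)$ gap, which is exactly what forces $d=\Omega(T^3)$. The only difference is emphasis: the paper makes interlacing the primary device---bounding the increment $\sum_{i\le k}\sigma_i^2(C_{t+1})-\sum_{i\le k}\sigma_i^2(C_t)$ by the single spread $\sigma_1^2(C_{t+1})-\sigma_k^2(C_t)$ and then applying concentration to that quantity---whereas you lead with the two-sided sandwich $k(1-\beta_K)^2/K\le R(K)\le k(1+\beta_K)^2/K$ and mention interlacing as backup; the arithmetic is equivalent. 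One point worth tightening: to land on the specific exponent $\exp(-\Omega(\sqrt d))$ in the statement rather than a cruder high-probability bound, the paper commits to the explicit choice $\alpha=((t+1)/d)^{1/4}$, and your sketch should pin down an analogous $\alpha$ rather than leaving it ``at the ceiling'' (taking $\alpha\asymp 1/t$ as your discussion suggests yields only $\exp(-\Omega(d/t^2))$).
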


\begin{proof}
From Theorem \ref{th:additive_noise}, we have that:
\begin{gather}
            \mse{t+1}{\sigma^2} = \mse{t+1}{0} + \frac{\sum_{i=1}^{k}\sigma_i^2(C_{t+1})}{t} \sigma^2, \qquad \mse{t}{\sigma^2} = \mse{t}{0} + \frac{\sum_{i=1}^{k}\sigma_i^2(C_t)}{t}\sigma^2\enspace.
\end{gather}
To prove the desired thing, we just need to show that $\mse{t+1}{\sigma^2}$ has a smaller co-efficient for the term $\sigma^2$, because for large enough $\sigma$, eventually this term will dominate the sum. Hence, we need to show that:
\begin{gather}
    \frac{\sum_{i=1}^{k} \sigma_i^2(C_t)}{t} \geq \frac{\sum_{i=1}^{k} \sigma_i^2(C_{t+1})}{t+1}\enspace.
\end{gather}

Since $C_t$ is a submatrix of $C_T$, from the Eigenvalue Interlacing Theorem we know that $\sum_{i=1}^{k}\sigma_i^2(C_t) \leq \sum_{i=1}^{k}\sigma_i^2(C_{t+1})$. However, the difference of the two sums is upper-bounded. Using Lemma \ref{lemma:interlacing}, we get that: 
\begin{gather}
    \sum_{i=1}^{k}\sigma_i^2(C_{t+1}) = \sigma_1^2(C_{t+1}) + \sum_{i=2}^{k} \sigma_{i}^2(C_{t+1}) \\
    \leq \sigma_1^2(C_{t+1}) + \sum_{i=1}^{k-1} \sigma_{i}^2(C_{t}) \\
    = \sigma_1^2(C_{t+1}) - \sigma_{k}^2(C_t) + \sum_{i=1}^{k}\sigma_i^2(C_t).
\end{gather}

It suffices to show that:
\begin{gather}
\frac{\sum_{i=1}^{k} \sigma_i^2(C_t)}{t} \geq \frac{\sigma_1^2(C_{t+1}) - \sigma_{k}^2(C_t) + \sum_{i=1}^{k}\sigma_i^2(C_t)}{t+1} \iff \\
\sum_{i=1}^{k}\sigma_i^2(C_t) \geq t\left( \sigma_1^2(C_{t+1}) - \sigma_k^2(C_t)\right).
\end{gather}
Trivially, $\sum_{i=1}^k \sigma_i^2(C_t) \geq k \sigma_{k}^2(C_t)$. Hence, it is enough to show that:
\begin{gather}
 \sigma_k^2(C_t) \geq \frac{t}{k} \left( \sigma_1^2(C_{t+1}) - \sigma_k^2(C_t)\right).
 \label{eq:sing_diff}
\end{gather}

We will now bound the difference of the first and $k$-th singular values.

From Lemma \ref{lemma:gaussian_spectral_concentration}, we have that:
\begin{gather}
    \Pr\left[ \sigma_1(C_{t+1}) \geq 1 + o(1) + \sqrt{\frac{t+1}{d}} + \alpha\right] \leq \exp\left( -d\alpha^2/2\right)
\end{gather}
and
\begin{gather}
    \Pr\left[ \sigma_k(C_{t}) \leq 1 + o(1) - \sqrt{\frac{t}{d}} - \alpha \right] \leq \exp(-d\alpha^2/2).
\end{gather}

By union bound, with probability $\geq 1 - 2\exp(-d\alpha^2/2)$, we have that:
\begin{gather}
    \sigma_1^2(C_{t+1}) - \sigma_k^2(C_t) \leq \left( 1 + o(1) + \sqrt{\frac{t+1}{d}} + \alpha\right)^2 - \left( 1 + o(1) - \sqrt{\frac{t}{d}} - \alpha \right)^2 \\
    = 2 (1 + o(1)) \left( \sqrt{\frac{t+1}{d}} + \alpha\right) \left( \sqrt{\frac{t}{d}} + \alpha\right) + \left( \sqrt{\frac{t+1}{d}} + \alpha\right)^2 - \left( \sqrt{\frac{t}{d}} + \alpha\right)^2 \\
    \leq 5\left( \sqrt{\frac{t+1}{d}} + \alpha \right)^2.
\end{gather}

We choose $\alpha = \left(\frac{t+1}{d}\right)^{1/4}$.
Since, $t < T < d$, we have that with probability $\geq 1 - \exp\left(-\Omega\left(\sqrt{d}\right)\right)$,
\begin{gather}
    \sigma_1^2(C_{t+1}) - \sigma_k^2(C_t) \leq 20 \sqrt{\frac{t+1}{d}}, \quad \sigma_k(C_t) \leq 1 + o(1) - 2\sqrt{\frac{t+1}{d}}.
\end{gather}

Going back to Eq. \ref{eq:sing_diff}, it suffices to show that:
\begin{gather}
    1 + o(1) - 2\sqrt{\frac{t+1}{d}} \geq \frac{20t}{k} \sqrt{\frac{t+1}{d}} \iff 
    1 + o(1) \geq \sqrt{\frac{t+1}{d}}\left( \frac{20t}{k} + 2\right).
\end{gather}
Since $t < T$, this is true for $d = \Omega(T^3)$.

\end{proof}

\begin{lemma}
Let $C$ be a matrix $\in \R^{d \times T}$ and $c_{T+1} \in \R^{d}$. Let also $\Cnew = \begin{bmatrix} C & c_{T+1}\end{bmatrix} \in \R^{d \times T+1}$. Denote with $\sigma_i(C)$ the $i$-th singular value of $C$, sorted from the largest to the smallest. Then,
\begin{gather}
    \sigma_{i+1}(\Cnew) \leq \sigma_{i}(C) \leq \sigma_{i}(\Cnew), \quad \forall i \in \{1, ..., T\}
        \label{eq:interlacing_theorem}
\end{gather}
\label{lemma:interlacing}
\end{lemma}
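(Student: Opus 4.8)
The plan is to reduce this appended-column interlacing claim to the classical Cauchy interlacing theorem for symmetric matrices, exploiting the correspondence between singular values of a matrix and eigenvalues of its Gram matrix. Recall that the singular values of $C \in \R^{d \times T}$ are exactly the square roots of the eigenvalues of the symmetric positive semidefinite Gram matrix $C^T C \in \R^{T \times T}$, and likewise the $\sigma_i(\Cnew)^2$ are the eigenvalues of $\Cnew^T \Cnew \in \R^{(T+1) \times (T+1)}$. So the singular-value statement is equivalent to an eigenvalue interlacing statement for these two Gram matrices.

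The key observation I would make is that appending one column to $C$ corresponds to bordering its Gram matrix by one row and one column. Writing the block form
\begin{gather}
\Cnew^T \Cnew = \begin{bmatrix} C^T C & C^T c_{T+1} \\ c_{T+1}^T C & \lVert c_{T+1}\rVert^2 \end{bmatrix},
\end{gather}
we see that $C^T C$ is precisely the leading $T \times T$ principal submatrix of $\Cnew^T \Cnew$, obtained by deleting the last row and column. Cauchy's interlacing theorem for a Hermitian matrix and a principal submatrix of codimension one then gives, with eigenvalues sorted in decreasing order, $\lambda_{i+1}(\Cnew^T\Cnew) \le \lambda_i(C^T C) \le \lambda_i(\Cnew^T\Cnew)$ for every $i \in \{1, \ldots, T\}$. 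Since all of these eigenvalues are nonnegative, taking square roots preserves the inequalities and yields exactly $\sigma_{i+1}(\Cnew) \le \sigma_i(C) \le \sigma_i(\Cnew)$, which is the claim.

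A self-contained alternative, if one prefers not to invoke Cauchy interlacing as a black box, is to argue directly from the Courant--Fischer min-max characterization of singular values: the embedding $\R^T \hookrightarrow \R^{T+1}$, $x \mapsto (x,0)$, identifies $\lVert Cx\rVert$ with $\lVert \Cnew (x,0)\rVert$, so any test subspace for $C$ lifts to one of the same dimension for $\Cnew$. Feeding this lift into the max-min formula (test dimension $i$) gives $\sigma_i(C) \le \sigma_i(\Cnew)$, while noting that $\sigma_{i+1}(\Cnew)$ and $\sigma_i(C)$ are both governed by test subspaces of dimension $T-i+1$ in the min-max formula, lifting the optimal such subspace for $C$ gives $\sigma_{i+1}(\Cnew) \le \sigma_i(C)$. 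Either route is mathematically routine; there is no substantive analytic obstacle. The only place requiring genuine care --- and where an off-by-one error could slip in --- is the index bookkeeping ($i$ versus $T-i+1$, and $T$ versus $T+1$), so that the two halves of the interlacing inequality attach to the correct indices.
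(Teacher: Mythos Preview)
Your primary argument is correct and essentially identical to the paper's proof: form $\Cnew^T\Cnew$, observe that $C^TC$ is its leading $T\times T$ principal submatrix, apply the Cauchy (eigenvalue) interlacing theorem, and take square roots. The Courant--Fischer alternative you sketch is a valid extra route, but the main line matches the paper exactly.
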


\begin{proof}
We have that:
\begin{gather}
    \Cnew^T\Cnew = \begin{bmatrix} C^T \\ c_{T+1}^T\end{bmatrix} \cdot \begin{bmatrix} C & c_{T+1} \end{bmatrix} = \begin{bmatrix}C^TC & C^Tc_{T+1} \\ c_{T+1}^T C & c_{T+1}^T c_{T+1} & \end{bmatrix}\enspace.
\end{gather}
Observe that $\Cnew^T\Cnew$ is a symmetric matrix and $C^TC$ is a principal submatrix. Hence, from the Eigenvalue Interlacing Theorem, we have that:
\begin{gather}
    \lambda_{i+1}(\Cnew^T\Cnew) \leq \lambda_i(C^TC) \leq \lambda_i(\Cnew^T\Cnew),
\end{gather}
where $\lambda_i(A)$ is the $i$-th eigenvalue of $A$, sorted from the largest to the smallest. To finish the proof, we note that for any matrix $A$, $\sigma_i(A) = \sqrt{\lambda_i(A^TA)}$.
\end{proof}

\section{Additional Results}
In this section, we include additional results that further support the findings of the main paper. 

\paragraph{Robustness slope} Recall Theorem \ref{th:additive_noise} of the paper.
\pagebreak
\begin{gather}
    \tikzmarknode{noisymse}{\highlight{red}{$\mse{T}{\sigma^2}$}} = \tikzmarknode{noiselessmse}{\highlight{blue}{$\mse{T}{0}$}} + \tikzmarknode{slope}{\highlight{green}{$\frac{\sum_{i=1}^{k}\sigma_i(C)^2}{T}$}} \cdot \tikzmarknode{noise}{\highlight{red}{$\sigma^2$}}\enspace.
\end{gather}

\begin{tikzpicture}[overlay,remember picture,>=stealth,nodes={align=left,inner ysep=2pt},<-]
\path (slope.south) node[anchor=north east,color=green!60] (scalep){\textbf{Robustness slope}};
\draw [color=green!87](slope.south) |- ([xshift=-0.3ex, yshift=-3ex,color=green]scalep.north west);

\path (noisymse.south) node[anchor=north east,color=red!67] (scalep){\textbf{Average MSE under noise}};
\draw [color=red!87](noisymse.south) |- ([xshift=-0.3ex, yshift=-3ex,color=red]scalep.north west);

\path (noiselessmse.north) ++ (0, 7pt) node[anchor=south west,color=blue!67] (scalep){\textbf{Average MSE without noise}};
\draw [color=blue!87](noiselessmse.north) |- ([xshift=-0.3ex,color=blue]scalep.south east);

\path (noise.south) node[anchor=north west,color=red!67] (scalep){\textbf{Noise Variance}};
\draw [color=red!87](noise.south) |- ([xshift=-0.3ex, yshift=-3ex,color=red]scalep.north east);
\end{tikzpicture}

This theoretical finding implies that the cross-over phenomenon that we observe in our experiments (at least for the linear case), stems from a lower Robustness Slope in the multitask models. Figure \ref{fig:mse_linear} shows that the MSE under noise is lower for models that are trained to do more tasks. In Figure \ref{fig:slope} of this Appendix, we show that indeed this is due to a decrease in the robustness slope. Across three different datasets, MNIST, CIFAR10, NewsGroup20, we see that increasing the number of tasks leads to a decrease in the robustness slope. We note that this does not necessarily mean a monotonic decrease in the MSE under noise. Since the total dataset size and the parameter $k$ stay the same, increasing the number of tasks usually leads to increased noiseless MSE. However, under the presense of noise, our theory predicts (and our experiments confirm) that eventually the multitask model will reach superior performance.

\begin{figure*}[!htp]
\begin{adjustbox}{width=160mm, center}
\begin{tabular}{cc}
\captionsetup{justification=centering}
\centering
\includegraphics[width=0.3\textwidth]{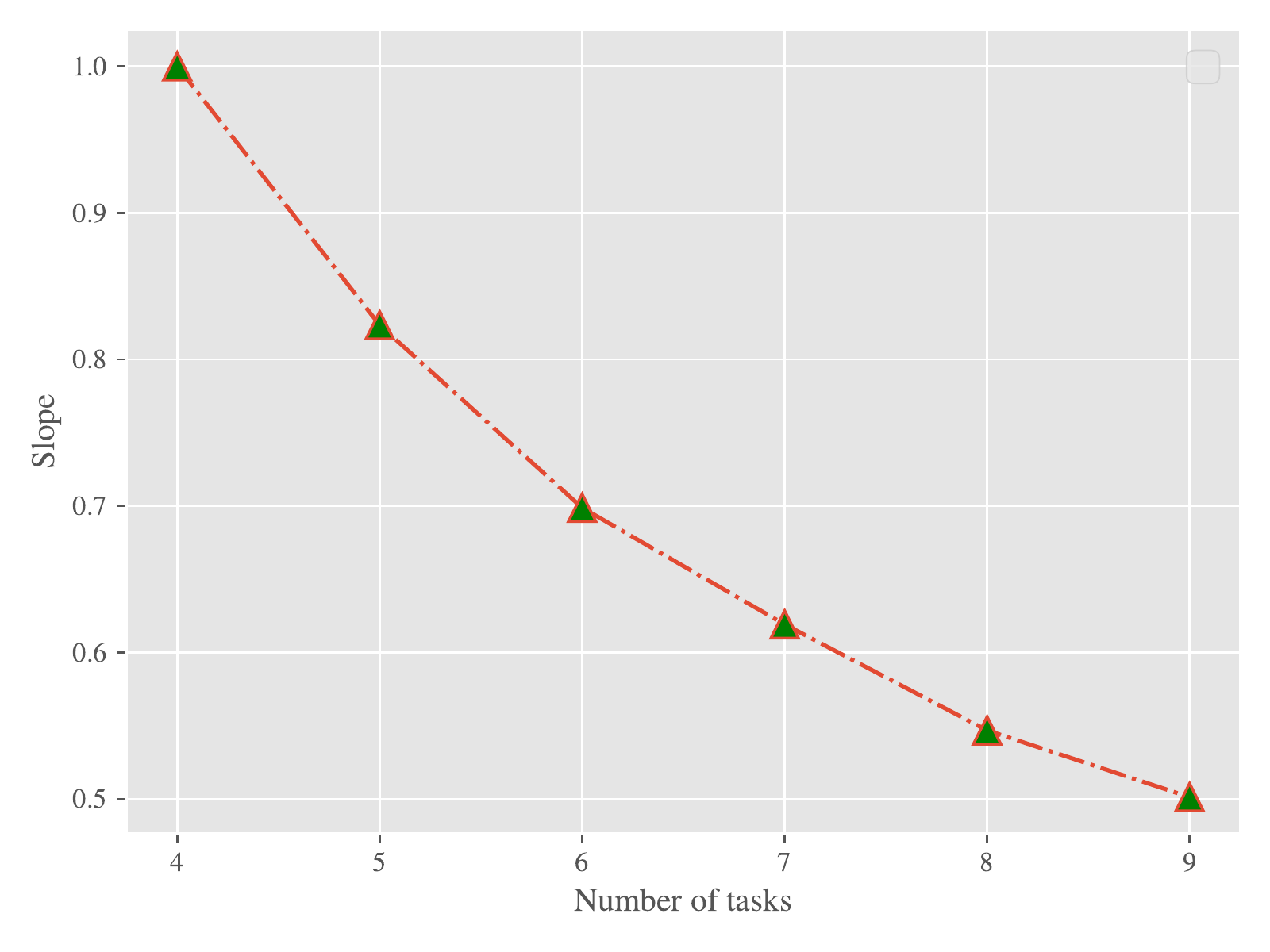} & 
\includegraphics[width=0.3\textwidth]{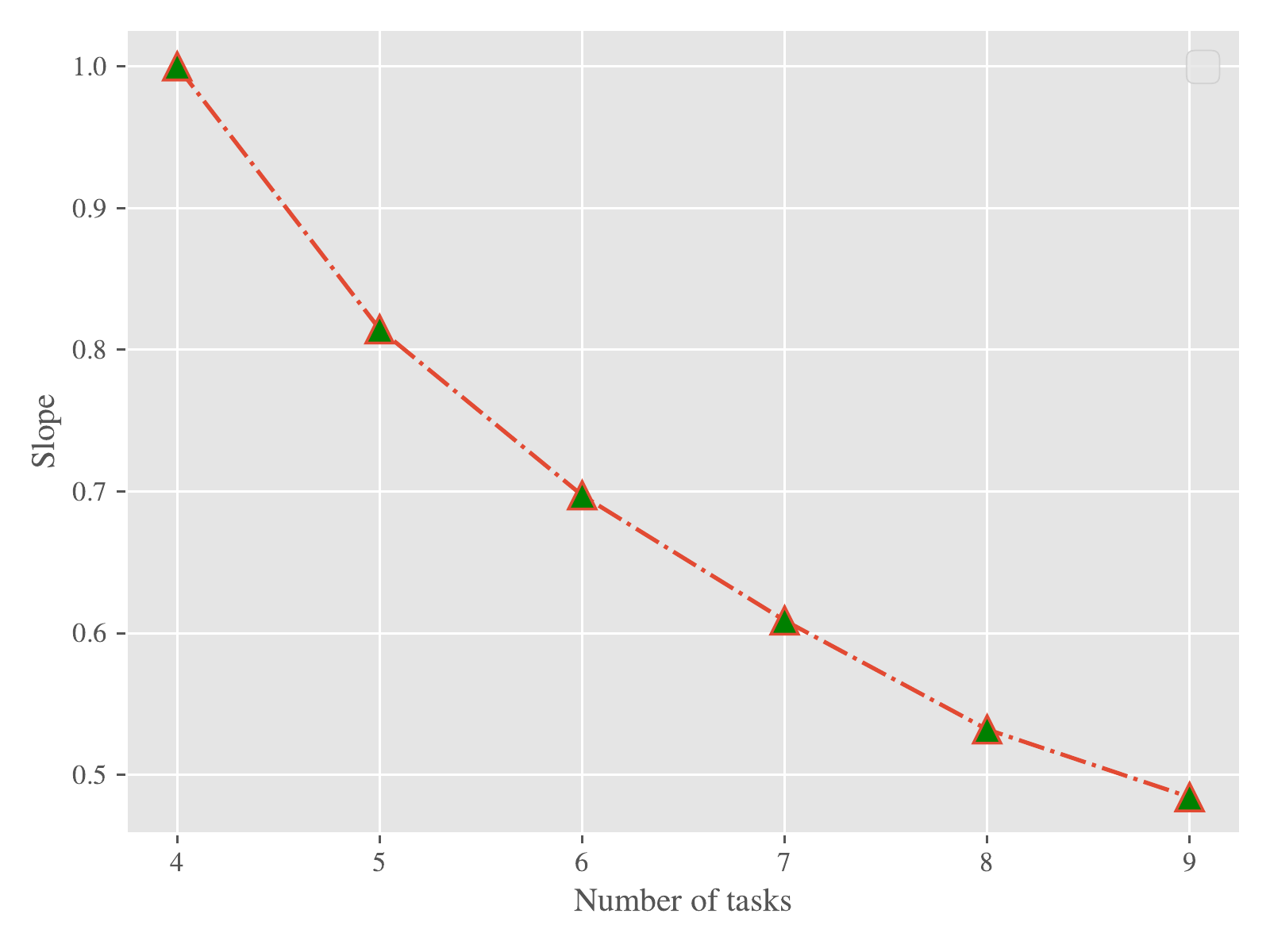} \\ 
\includegraphics[width=0.3\textwidth]{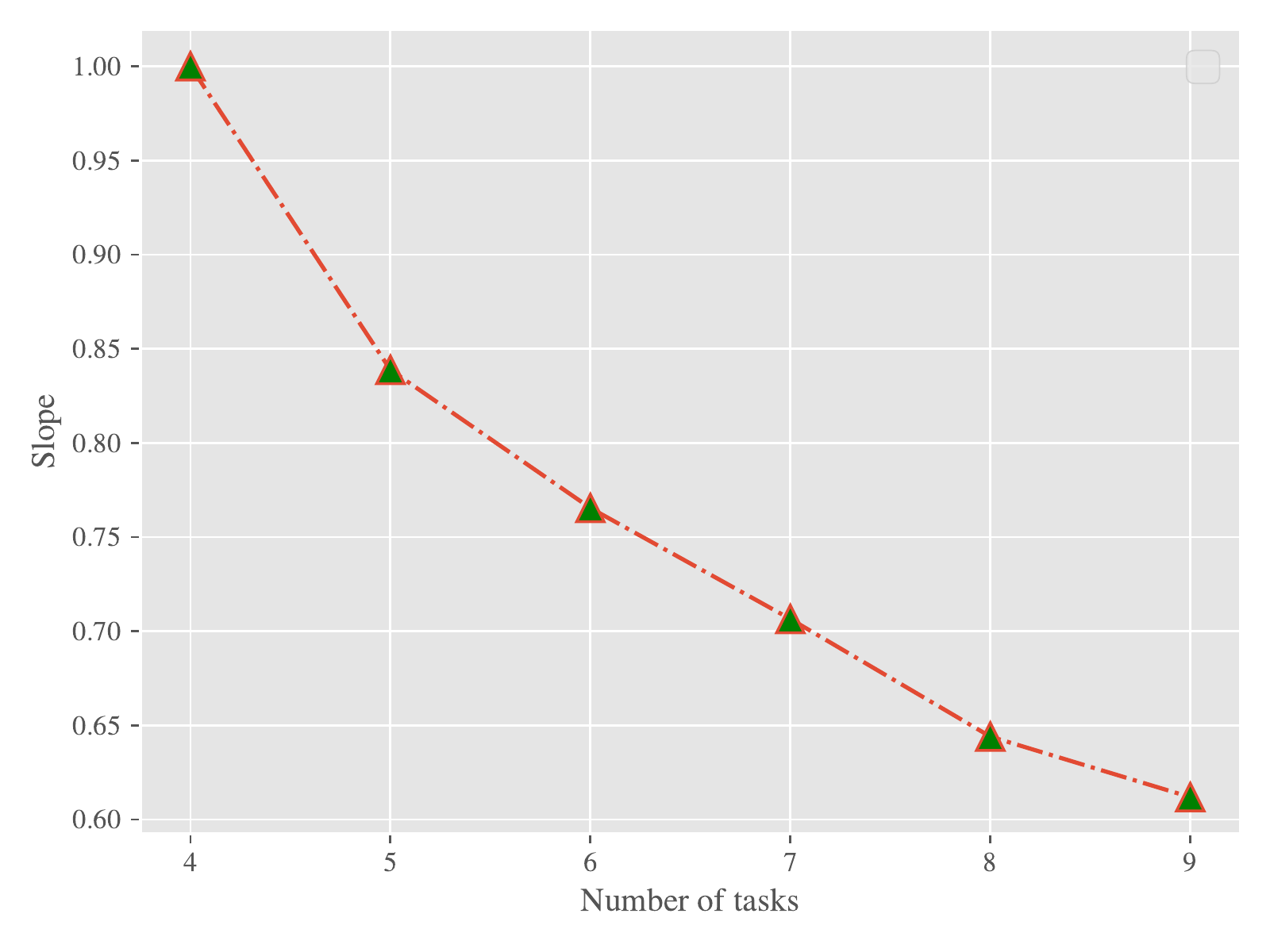}
\end{tabular}
\end{adjustbox}
\caption{Slope as a function of the number of tasks for different datasets. (1, 1): MNIST, (1, 2): CIFAR10, (2, 1): NewsGroup20. As shown, adding more tasks decreases the robustness slope which leads to an increase in robustness (see Theorem \ref{th:additive_noise}).}
\label{fig:slope}
\end{figure*}

\paragraph{Experiments on other languages} For our experiments on multilingual generative models, we decided to use Greek and English because we were looking for a linguistic pair with different morphology, syntax and phonology. This is inspired by our theory on linear models that shows that diversity in the tasks (as we have for the Gaussian task vectors) leads to a sublinear increase in the sum of the top-k singular values of the task matrix and hence an increase in robustness. For completeness, we include here experiments on a different linguistic pair, English and Spanish. English and Spanish much closer linguistically and also share the Latin alphabet, so we expect bigger transfer and smaller robustness benefit in this linguistic pair. 

We compare a monolingual English model (finetuned on English Wikipedia) with a bilingual, English and Spanish, model. The bilingual model is finetuned on a concatenation of English and Spanish Wikipedia. We make sure that the total dataset size is the same for the monolingual and the bilingual model, i.e. the bilingual model is exposed to half English data compared to the monolingual. This ensures that any benefits in terms of robustness are not coming from exposure to more data. We present results on random deletions in Figure \ref{fig:gpt2_spanish_bilingual_ppl} of the Appendix -- this Figure is similar to Figure \ref{fig:gpt2_random_deletion} of the paper, but instead of having English and Greek, we have English and Spanish. As shown in Figure \ref{fig:gpt2_spanish_bilingual_ppl}, even though the two models are starting from roughly the same perplexity, the bilingual model exhibits higher structural robustness in the presence of weight deletions. This is consistent with the results we showed across this paper and indicates that the increased robustness is not specific to the choice of the linguistic pair.

\begin{figure}[!htp]
    \centering
    \includegraphics[width=8cm]{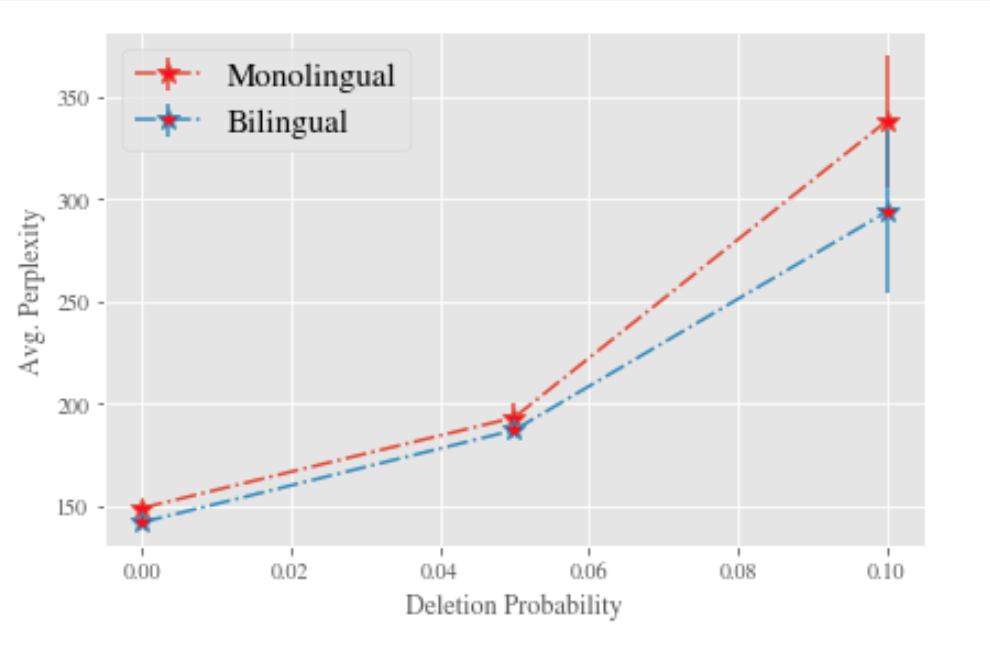}
    \caption{ Performance of monolingual (English) and bilingual (English/Spanish) GPT-2 models with the same architecture and training
dataset size. The x-axis indicates the probability of
erasing an attention weight parameter (setting to it zero). The y-axis indicates the average perplexity over 20
runs. Models have a close initial accuracy. Perplexity increases (showing lower accuracy) as weight deletion probability is increased, though bilingual model perplexity rises at a slower rate.}
   \label{fig:gpt2_spanish_bilingual_ppl}
\end{figure}

Notice that the gap in the performance is smaller compared to the one presented in Figure \ref{fig:gpt2_random_deletion}. This is aligned with our theory for linear models that predicts that the benefits of multitasking for robustness are more evident for more diverse tasks. Since English and Spanish are linguistically closer, compared to English and Greek, our intuition is that the difference in robustness is going to be smaller and this is also confirmed by this experiment. An interesting future direction is to study this robustness benefit for multiple linguistic pairs or multi-lingual models. However, this study requires massive computational resources. Similarly, it would be interesting to study how the robustness gap in bilingual models scales as the datasets scale, but this also requires training multiple pairs of GPT models to comparable accuracy, and requires computational resources that were not available to us. We hope that future research is going to shed more light into these exciting directions.

\paragraph{Experiments with different corruption mechanisms.} In the main paper, we primarily presented results with random deletions of neurons as our corruption model for the language modeling experiments. We include results for additive Gaussian noise for GPT-2 (monolingual and bilingual). We choose to present additional results with this noise model since it is the one analyzed by our theory. Table \ref{table:gaussian_noise_table} summarizes how the performance of GPT-2 (monolingual and bilingual) changes when we add different amount of noise to the weights. We evaluate this performance on downstream tasks from the GLUE paper. Figure \ref{fig:add_g} visualizes the decrease of performance as the magnitude of the noise rises for different number of tasks. The results are similar with the results presented in the main paper for random deletions. In QQP, the monolingual model performs better without perturbations. Both models decay with a close rate. The monolingual model outperforms in SST-2 with no perturbations. Both models decay with a close rate. For CoLA, the monolingual model maintains a significantly better performance regardless of the noise level. Finally, for MRPC we see that although the bilingual model shows a weaker classification accuracy with no noise, it outperforms the monolingual model for noise levels higher than 0.035. These results complement Figure \ref{fig:gpt2_magnitude_deletion} of the main paper that shows robustness of GPT-2 to additive Gaussian noise for the task of language modeling.

\begin{figure*}[htbp]
\begin{center}
\begin{subfigure}
\centering
    \includegraphics[width=0.44\linewidth]{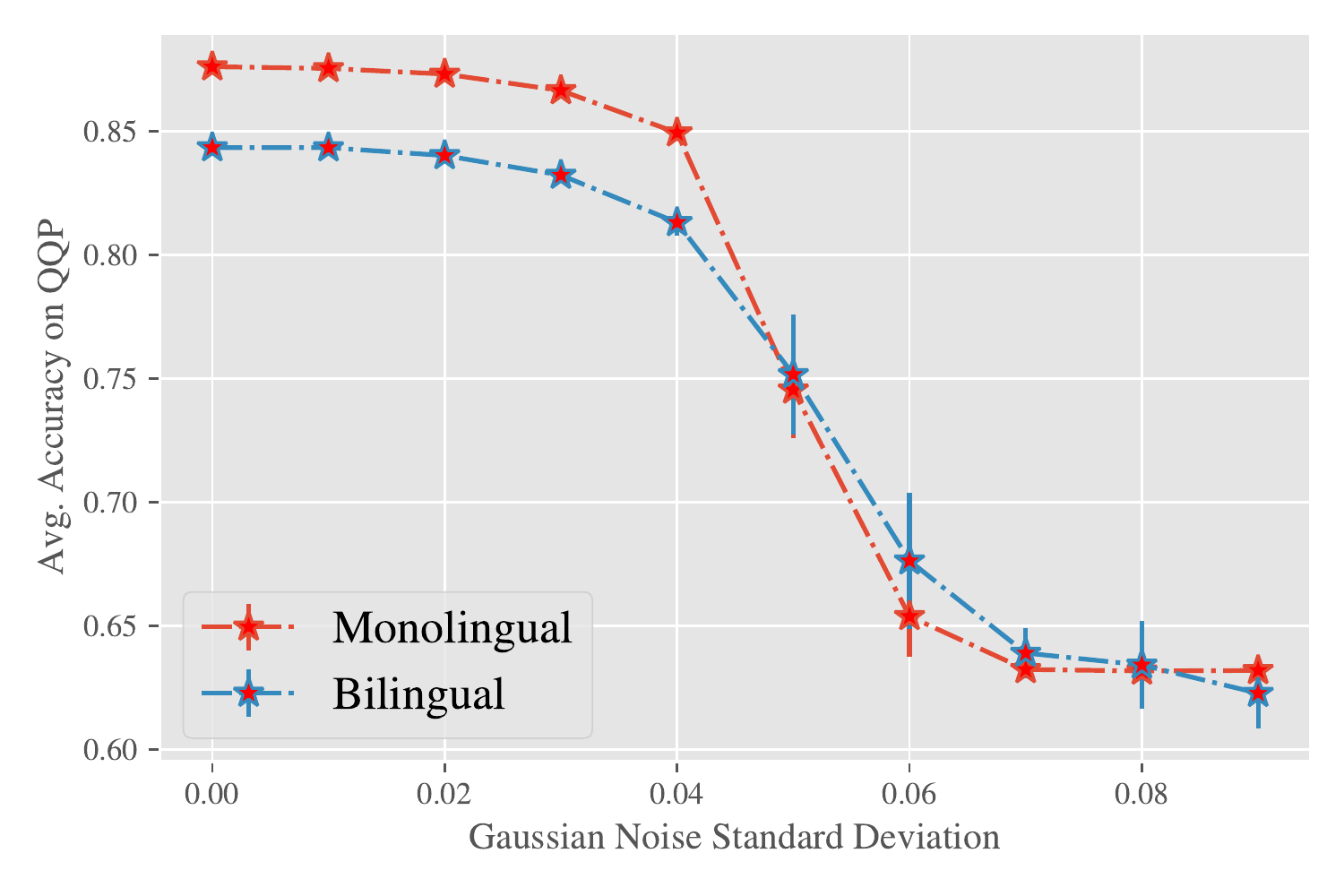}
\end{subfigure}
\begin{subfigure}
\centering
    \includegraphics[width=0.44\linewidth]{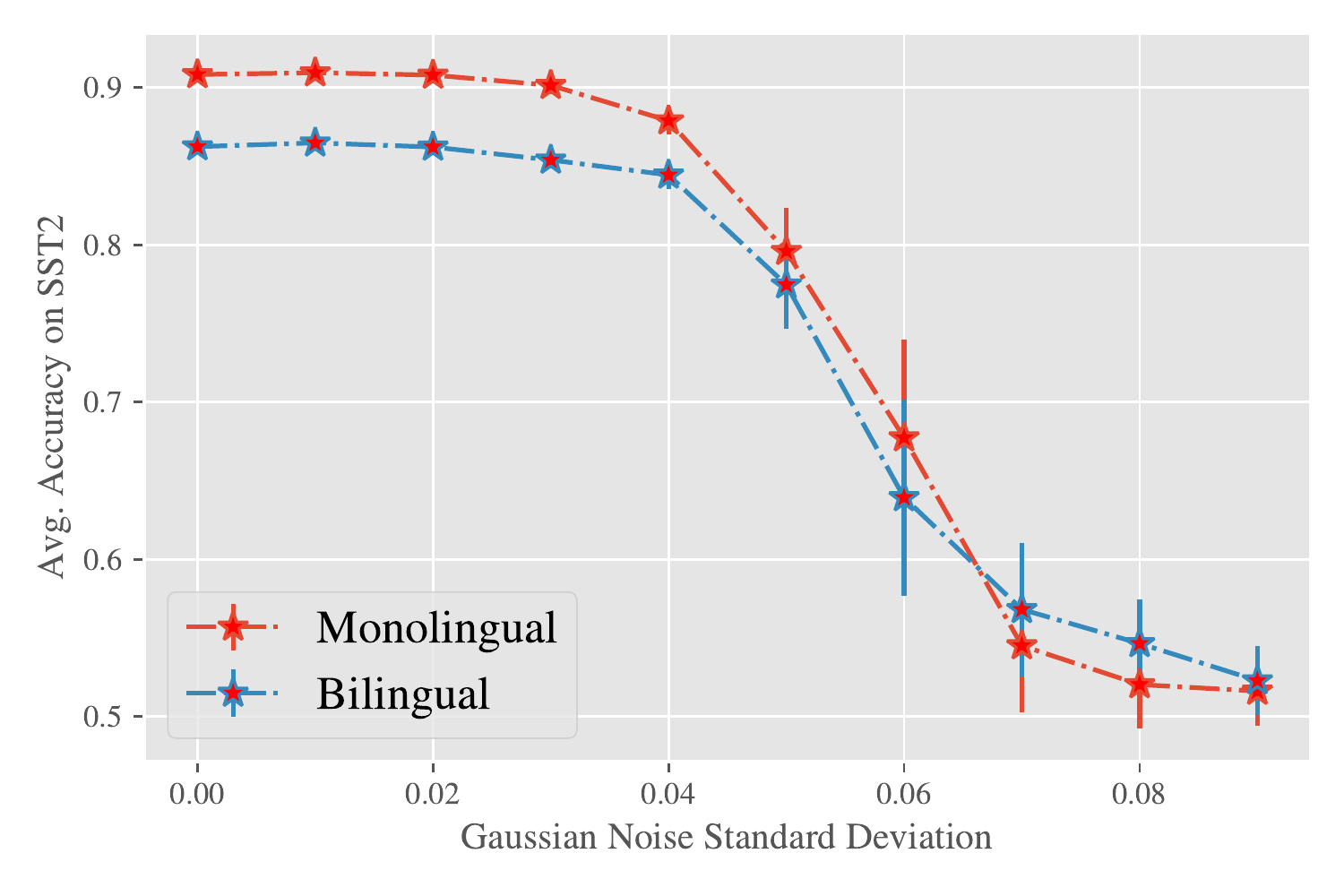}
\end{subfigure}
\begin{subfigure}
\centering
    \includegraphics[width=0.44\linewidth]{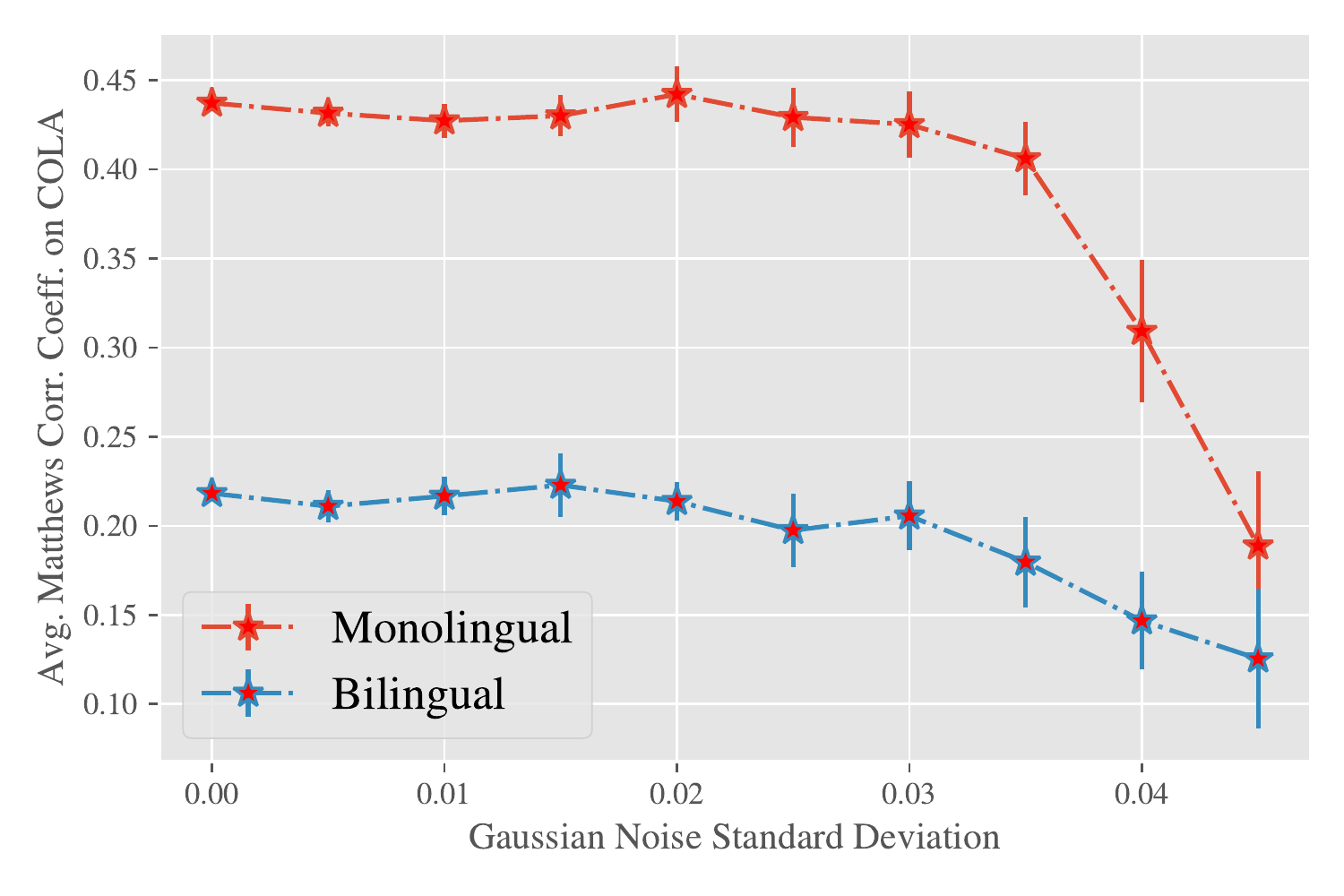}
\end{subfigure}
\begin{subfigure}
\centering
    \includegraphics[width=0.44\linewidth]{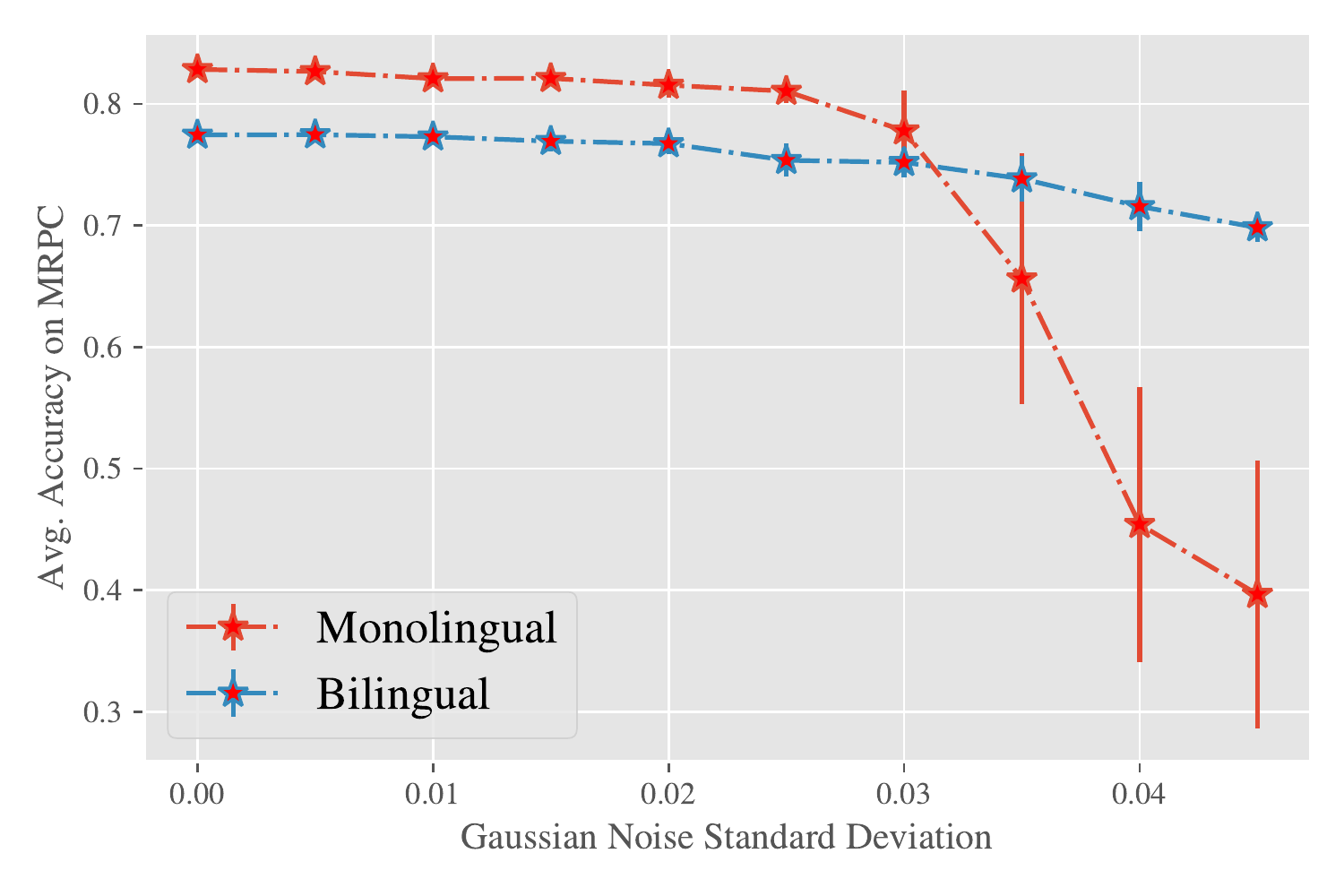}
\end{subfigure}
\end{center}
\caption{Performance comparison in GLUE tasks: QQP, SST2, CoLA, and MRPC under Gaussian noise. QQP: The monolingual model performs better without perturbations. Both models decay with a close rate. SST2: The monolingual model outperforms with no perturbations. Both models decay with a close rate. CoLA: The monolingual model maintains a significantly better performance regardless of the noise level. MRPC: Although the bilingual model shows a weaker classification accuracy with no noise, it outperforms the monolingual model for noise levels higher than 0.035.}
\label{fig:add_g}
\end{figure*}

\begin{table*}[!htp]
    \centering
\small
\begin{tabular}{c|cc|cc|cc|cc|cc}
Gaussian std. & \multicolumn{2}{c}{QQP}  & \multicolumn{2}{c}{SST2} & \multicolumn{2}{c}{COLA}  & \multicolumn{2}{c}{MRPC} & \multicolumn{2}{c}{RTE} \\
 & m. & b. & m. & b. & m. & b. & m. & b. & m. & b. \\
0.00 & 0.876& 0.843& 0.908 & 0.862 & 0.437& 0.218 & 0.828 & 0.774 & 0.646 & 0.595\\
0.01 & 0.875& 0.843& 0.909 & 0.864 & 0.432& 0.216 & 0.827 & 0.772 & 0.639 & 0.597\\
0.02 & 0.873& 0.840& 0.907 & 0.862 & 0.444& 0.213 & 0.816 & 0.760 & 0.641 & 0.591\\
0.03 & 0.866& 0.832& 0.901 & 0.853 & 0.440& 0.205 & 0.776 & 0.749 & 0.643 & 0.590\\
0.04 & 0.849& 0.813& 0.878 & 0.844 & 0.316& 0.146 & \color{forestgreen}0.494 & \color{forestgreen}\textbf{0.711} & 0.634 & 0.589\\
0.05 & \color{forestgreen}0.745& \color{forestgreen}\textbf{0.751}& 0.795 & 0.774 & 0.088& 0.074 & 0.326 & \textbf{0.673} & 0.622 & 0.577\\
0.06 & 0.653& \textbf{0.676}& 0.677 & 0.639 &-0.002& 0.004 & 0.316 & \textbf{0.610} & 0.602 & 0.563\\
0.07 & 0.632& \textbf{0.638}& \color{forestgreen}0.545 & \color{forestgreen}\textbf{0.568} & 0.019& 0.002 & 0.316 & \textbf{0.577} & 0.585 & 0.562\\
0.08 & 0.631& \textbf{0.634}& 0.520 & \textbf{0.546} &-0.006&-0.006 & 0.316 & \textbf{0.465} & 0.539 & 0.546\\
0.09 & 0.631& 0.622& 0.516 & \textbf{0.522} &-0.014& 0.002 & 0.316 & \textbf{0.451} & 0.536 & 0.528\\
\end{tabular}
     \caption{ Performance on GLUE when adding Gaussian noise. Columns labeled with "m" determine classification accuracy of monolingual models and columns labeled as "b" correspond to bilingual models. 
     CoLA is evaluated using Matthew's Correlation and other tasks are evaluated by accuracy.
     }
     \label{table:gaussian_noise_table}
\end{table*}

\begin{table}[!htp]
    \centering
\begin{tabular}{c}
    \includegraphics[width=8.5cm]{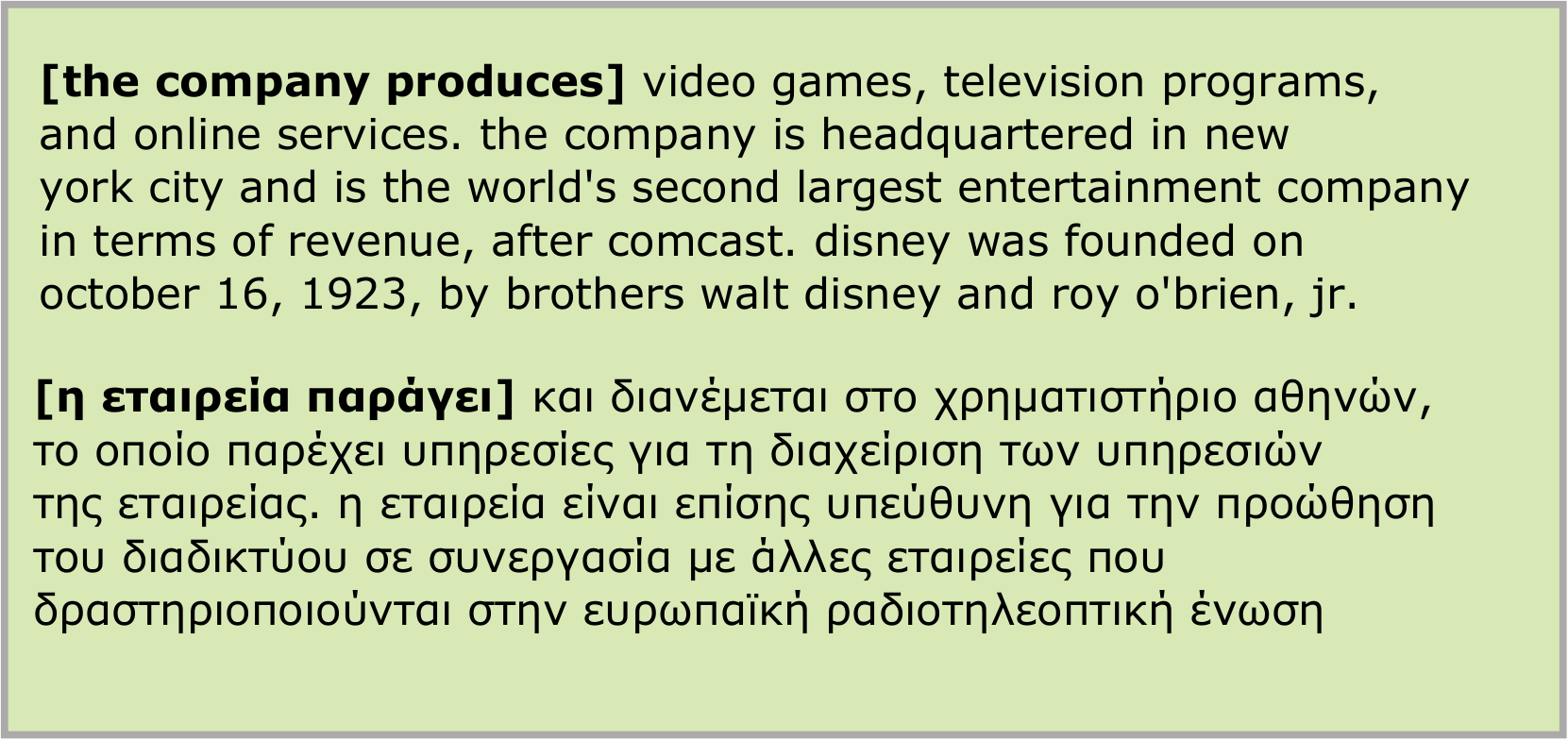}
     \end{tabular}
     \caption{Sample text generated by the bilingual GPT-2 model. Text in the brackets is the starting prompt provided for model.}
     \label{table:generated_text}
\end{table}

\section{Training Details}
Using the GPT-2 small model as our baseline, we fine-tuned a monolingual (English) model and a bilingual (English and Greek) model on Wikipedia text data. With set the vocabulary size to 50257 tokens. In both training processes, we set the initial learning rate to 3e-4 and configured a cosine learning rate scheduler with 150 warmup steps, setting AdamW optimizer weight decay to 0.01. We trained each model for eight epochs, using 4 NVIDIA Quadro RTX 5000 GPUs. Training took approximately 10 hours per epoch.

To fine-tune another bilingual model on English and Spanish data, we fine-tuned a monolingual model and a bilingual model on Wikipedia text data. With a vocabulary size of 50257 tokens, the monolingual model was fine-tuned on 800,000 English articles. The bilingual model was fined tuned on a mix of 400,000 Spanish and 400,000 English articles, using the same vocabulary size of 50257. Like the previous experiment, we set the initial learning rate to 3e-4 and configured a cosine learning rate scheduler with 150 warmup steps, setting AdamW optimizer weight decay to 0.01.

We further tuned bilingual and monolingual models for the text classification experiments using GLUE datasets. For these experiments, we used the AdamW optimizer with a learning rate of 2e-5, and epsilon at 1e-8. We used a linear scheduler with no warmup steps and trained models for more than ten epochs.

\begin{figure}[!t]
    \centering
    \includegraphics[width=8.5cm]{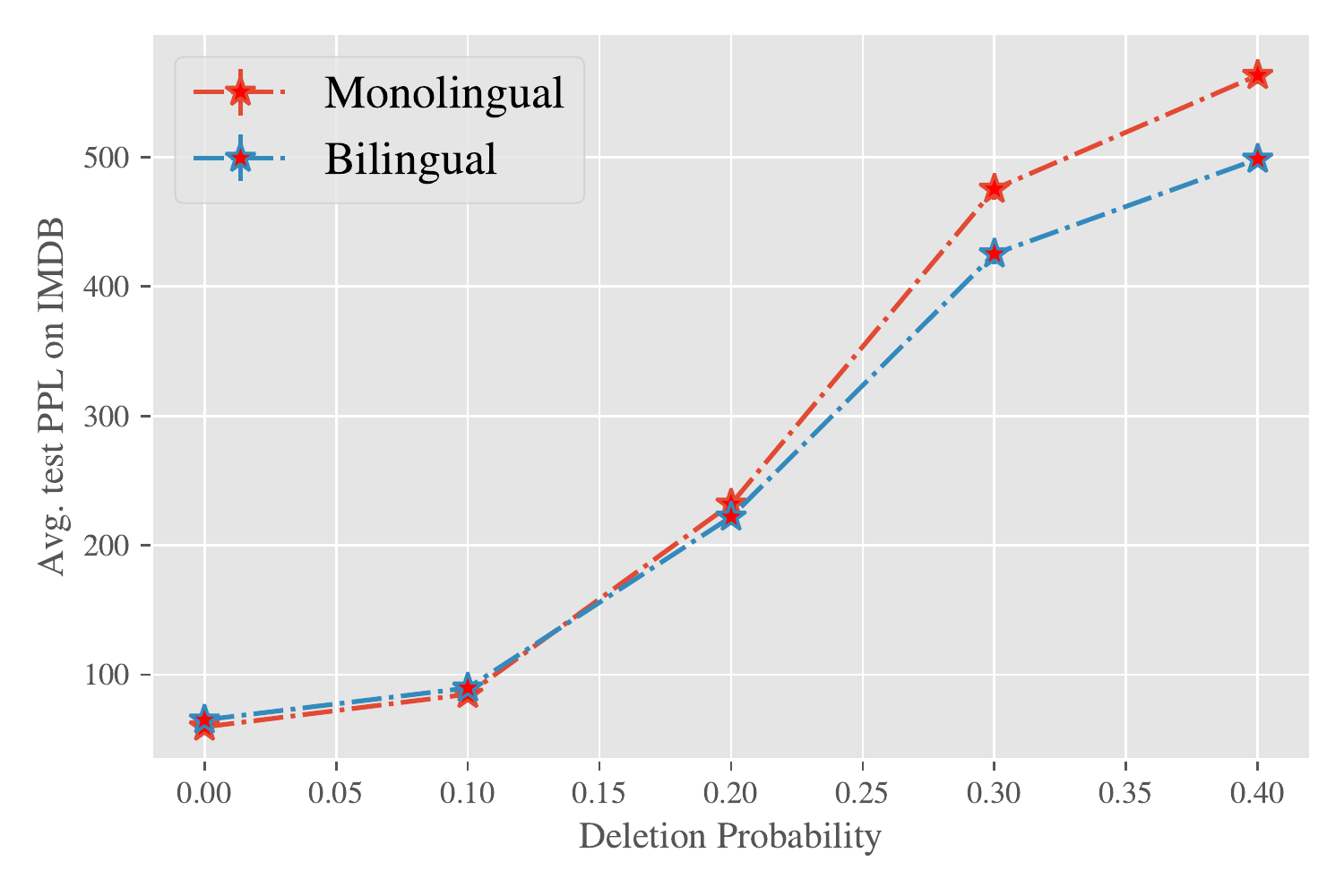}
    \caption{\small 
    Performance of monolingual and bilingual GPT-2 models with the same architecture and training
dataset size. Models are trained using the truncation length 1024. We show the performance as we randomly erase weights. The plot indicates the average perplexity over 20
runs with 95\% confidence intervals. This plot indicates that the monolingual model declines faster and performs worse in the highly damaged regime. The bilingual GPT-2 model is more robust to neuron weight erasures.}
   \label{fig:fig1correction}
\end{figure}

\begin{figure}[!t]
    \centering
    \includegraphics[width=8.5cm]{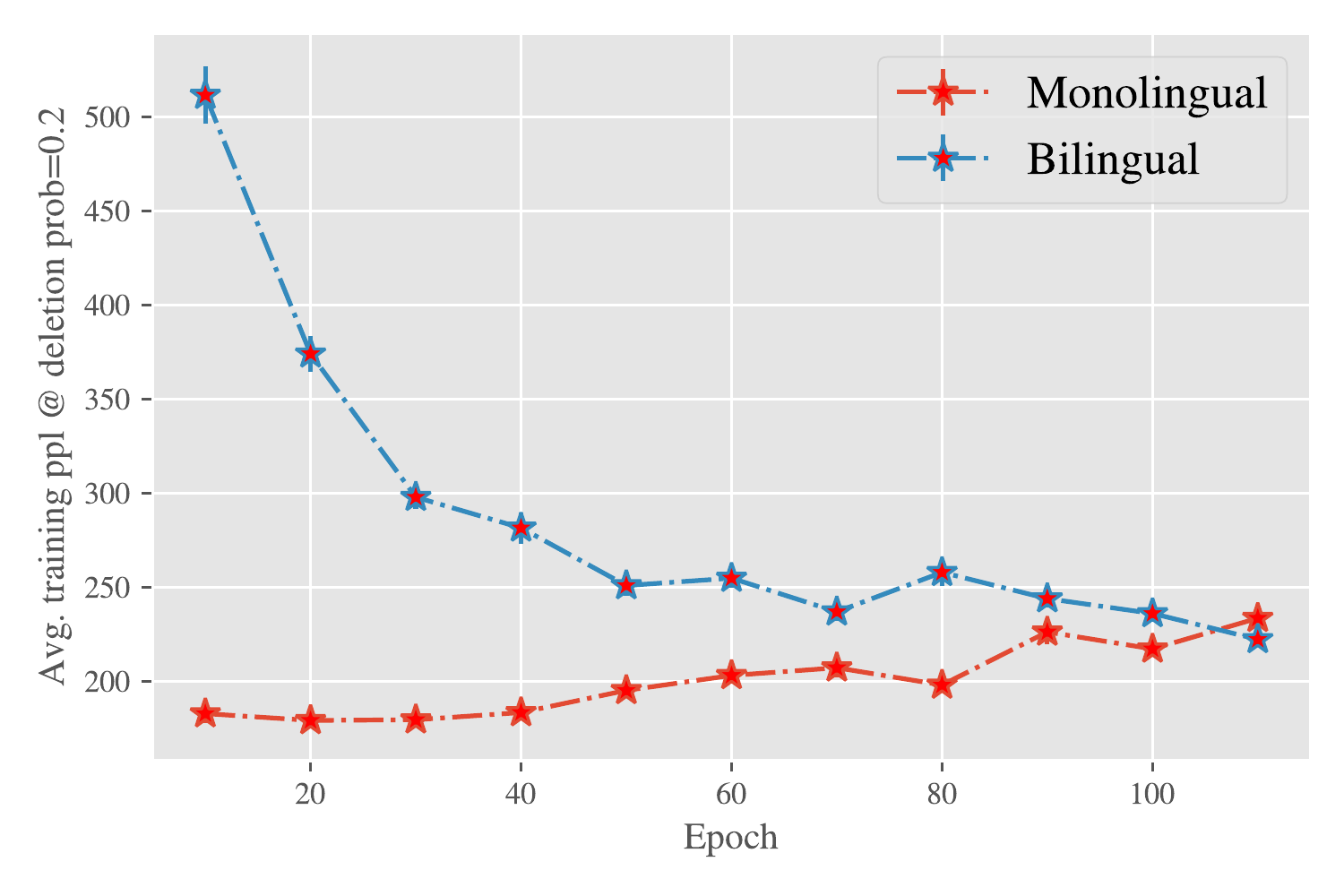}
    \caption{\small The plot shows performance of the monolingual and bilingual models during training. We plot performance at 20 percent random weight deletions to observe robustness behavior as models are trained. Notice that the bilingual model starts with very poor performance because the tokenizer has been randomized due to the recycling process. As training progresses, the bilingual model outperforms the monolingual model and becomes more robust.}
   \label{fig:varyingtraining}
\end{figure}

For the experiments on the linear representation layer, we used Adam optimizer with weight decay $1e-4$. We trained all the models with a batch size of $128$ and to a maximum of 50 epochs. To emulate multiple tasks, we selected different subsets of the classes. We experiments with having class overlaps (e.g. for MNIST one task might have been predicting 0 vs 1 and some other task predicting 1 vs 2) and without class overlaps (e.g. predicting 0 vs 1 and 2 vs 3). We noticed bigger robustness benefits when there was no class overlap something that is consistent with our theoretical analysis that implies that diversity in the tasks is needed. In terms of corruptions, we also did preliminary experiments on random deletions and we saw similar results. The interested reader might use the released code to perform other types of weights corruptions and see how this affects robustness trends. For all our experiments, we fix the representation dimension to $k=4$, which is also why we show the robustness slope from $k=4$ onwards on Figure \ref{fig:slope}. Training time of the linear experiments depends on the dataset size: it took us roughly 1 hour on CIFAR-10 and 3 hours on NewsGroup20.

While preparing the camera-ready for this paper we noticed a parameter mismatch between the training parameters of the monolingual and the bilingual model affecting some of our experiments. Specifically, the bilingual training dataset was tokenized with sentences truncated at length 128 while the monolingual dataset was tokenized using truncation length 1024. To check if this hyperparameter mismatch causes the bilingual robustness benefit, we re-trained the bilingual model using sequence truncation length 1024. This experiment is shown in Figure  \ref{fig:fig1correction} and shows similar behaviour as before. 

We performed an additional experiment to study how robustness changes during training. In Figure~\ref{fig:varyingtraining} we plot the performance of the monolingual and bilingual models as we train, but plotting performance at 20 percent random deletions, i.e. measuring also robustness. We show that as training progresses, the bilingual model becomes more robust compared to the monolingual one. 

The training datasets for the bilingual and monolingual models have the same size. The bilingual dataset is a concatenation of half Greek (from Greek Wikipedia) and half English text (from the cc\_news dataset \citep{cc_news}). Similar to previous experiments, we use the Language Model Recycling Technique \citep{gpt_recycling} to pre-trained GPT-2 (small)\citep{radford2019language}.

\section{Things that did not work}
In the early stages of the project, we attempted to train a bilingual model from scratch, instead of using the recycling technique~\cite{gpt_recycling}. The dataset for the Greek model consists of roughly $2$GB of text from Wikipedia. With such limited amount of data, we found it impossible to train a bilingual model that reaches a reasonable perplexity. Note that GPT-2 was trained on $\approx40$GB of text, i.e. on a $\approx 20\times$ bigger dataset. We found that the recycling technique~\cite{gpt_recycling} enables learning with much smaller datasets (on top of the computational benefits it offers).

\section{Limitations and Ethical Considerations}
\paragraph{Limitations} Even though the models we train can produce text of reasonable quality (e.g. see Table \ref{table:generated_text}), they do not perform on par with state-of-the-art generative networks. There are many reasons for that, e.g. we do not have the computational resources to train bigger networks and the dataset size is small. Nevertheless, the goal of this paper is not to advance the state-of-the-art in text-generation but to shed light on how multitasking is related to robustness. 

The motivation of this paper was a theory from Cognitive Science about increased robustness in bilingual speakers. We see that bilingual artificial networks are also more robust compared to monolingual models trained under the same setting. However, it is important to state that no definite extrapolations should be made to Cognitive Neuroscience without significantly much work. Our models of corruptions happening to the neural network's weights are chosen primarily for simplicity in the implementation and in the analysis. There is no evidence that brain pathologies have any resemblance to the models of corruption analyzed in this work for artificial neural networks.

Finally, our theory did not analyze the learning dynamics for approximating the task vectors. Instead, it used the SVD solution. Different choices of learning algorithms might lead to different behaviors regarding robustness. For example, for the linear case we showed that multitasking creates weight regularization. Higher explicit weight regularization (e.g. with high weight decay) might help the single task model decrease the robustness gap with the multitask networks. It would also be interesting to explore how the theory can be generalized to the non-linear case.

\paragraph{Ethical Considerations} As part of this work, we are releasing pre-trained bilingual models. Big language models can be misused in many different ways including spreading of fake news, generation of toxic speech, etc. We encourage the readers to refer to~\citet{bender2021dangers, brown2020language} for an extended discuss of the risks of releasing powerful language models. In our case, the released models are not nearly as big or powerful as state-of-the-art networks such as GPT-3. For all our experiments, we are using the small version of GPT-2 and the main objective is to see how learning multiple languages affects robustness to weight corruptions. Additionally, we are not training these models from scratch, but we are using the recycling technique proposed in~\citet{gpt_recycling}, hence the environmental cost of the training is much smaller.

\section{Code and License}
We open-source our code and pre-trained models to encourage more related research: \href{https://github.com/giannisdaras/multilingual_robustness}{https://github.com/giannisdaras/multilingual\_robustness}. The code us released under the GNU GENERAL PUBLIC LICENCE. The interested reader should also refer to the licenses of pre-existing software we use. Please look at the \texttt{requirements.txt} file of our code to find all our dependencies. 

The code for the training of the bilingual models is written in PyTorch~\citep{pytorch} and it is based on the implementation of GPT-2 found in the \texttt{transformers}~\citep{wolf2019huggingfaces} library. The code for the linear experiments is written in JAX~\cite{jax2018github}.

We expect that the release of bilingual and monolingual models trained on identical conditions will motivate further research in this area by cognitive scientists doing computational research. The main motivation for this paper was a theory from Cognitive Science regarding increased Cognitive Reserve in bilingual people. We expect that there could be many more interesting directions in Cognitive Science that can be studied from a computational perspective and we hope that the release of bilingual models will contribute towards this goal.

\end{document}